\newtheorem{example}{Example}
\newtheorem{remark}{Remark}
\newtheorem{theorem}{Theorem}
\newtheorem{lemma}{Lemma}
\newtheorem{definition}{Definition}
\newtheorem{proposition}{Proposition}
\newtheorem{problem}{Problem}
\newcommand{\uppervalue}{\overline{V}_R}
\newcommand{\lowervalue}{\underline{V}_R}
\newcommand{\uppercost}{\overline{V}_C}
\newcommand{\lowercost}{\underline{V}_C}
\newcommand{\upperQvalue}{\overline{Q}_R}
\newcommand{\lowerQvalue}{\underline{Q}_R}
\newcommand{\upperQcost}{\overline{Q}_C}
\newcommand{\lowerQcost}{\underline{Q}_C}
\DeclareMathOperator*{\argmax}{arg\,max}
\DeclareMathOperator*{\argmin}{arg\,min}
\let\oldnl\nl
\newcommand{\nonl}{\renewcommand{\nl}{\let\nl\oldnl}}
\newif\ifmargincomments
\newcommand{\qh}[1]{\textcolor{purple}{[QH: #1]}}
\newcommand{\bk}[1]{\textcolor{olive}{[BK: #1]}}
\newcommand{\zl}[1]{\textcolor{orange}{[ZL: #1]}}
\newcommand{\tb}[1]{\textcolor{cyan}{[TB: #1]}}
\newcommand{\ml}[1]{\textcolor{blue}{[ML: #1]}}
\newcommand{\zs}[1]{\textcolor{teal}{[ZS: #1]}}
\newcommand{\qh}[1]{}
\newcommand{\bk}[1]{}
\newcommand{\zl}[1]{}
\newcommand{\tb}[1]{}
\newcommand{\ml}[1]{}
\newcommand{\zs}[1]{}
\title{Recursively-Constrained Partially Observable Markov Decision Processes}
\author[1]{\href{mailto:<qi.ho@colorado.edu>?Subject=Your UAI 2024 paper}{Qi Heng Ho}{}}
\author[1]{Tyler Becker}
\author[1]{Benjamin Kraske}
\author[1]{Zakariya Laouar}
\author[2]{Martin S. Feather}
\author[2]{Federico Rossi}
\author[1]{Morteza Lahijanian}
\author[1]{Zachary Sunberg}
\affil[1]{%
    Department of Aerospace Engineering Sciences\\
    University of Colorado Boulder\\
    Boulder, Colorado, USA
}
\affil[2]{%
    Jet Propulsion Laboratory\\
    California Institute of Technology\\
    Pasadena, California, USA
}
\begin{document}

\maketitle
\begin{abstract}
         Many sequential decision problems involve optimizing one objective function while imposing constraints on other objectives. Constrained Partially Observable Markov Decision Processes (C-POMDP) model this case with transition uncertainty and partial observability. In this work, we first show that C-POMDPs violate the optimal substructure property over successive decision steps and thus may exhibit behaviors that are undesirable for some (e.g., safety critical) applications. Additionally, online re-planning in C-POMDPs is often ineffective due to the inconsistency resulting from this violation. To address these drawbacks, we introduce the Recursively-Constrained POMDP (RC-POMDP), which imposes additional history-dependent cost constraints on the C-POMDP. We show that, unlike C-POMDPs, RC-POMDPs always have deterministic optimal policies and that optimal policies obey Bellman's principle of optimality. We also present a point-based dynamic programming algorithm for RC-POMDPs. Evaluations on benchmark problems demonstrate the efficacy of our algorithm and show that policies for RC-POMDPs produce more desirable behaviors than policies for C-POMDPs.
\end{abstract}
\section{Introduction}

Partially Observable Markov Decision Processes (POMDPs) are powerful models for sequential decision making due to their ability to account for transition uncertainty and partial observability. 
Their applications range from autonomous driving \cite{Pendleton2017autonomousvehicles} and robotics to geology \cite{Lauri2023POMDPRobotics, WANG2022groundwater}, asset maintenance \cite{PAPAKONSTANTINOU2014maintenance}, and human-computer interaction \cite{Chen2020trustpomdp}. Constrained POMDPs (C-POMDPs) are extensions of POMDPs that impose a bound on expected cumulative costs while seeking policies that maximize expected total reward. C-POMDPs address the need to consider multiple objectives in applications such as autonomous rover that may have a navigation task as well as an energy usage budget, or human-AI dialogue systems with constraints on the length of dialogues. However, we observe that optimal policies computed for C-POMDPs exhibit pathological behavior in some problems, which can be opposed to the C-POMDP's intended purpose.

\begin{example}[Cave Navigation]
    \label{ex:caveexample}
    Consider a rover agent in a cave with two tunnels, A and B, which may have rocky terrains. Traversing tunnel A has a higher expected reward than traversing tunnel B. To model wheel damage, a cost of $10$ is given for traversing through rocky terrain, and $0$ otherwise. The agent has noisy observations (correct with a probability of $0.8$) of a tunnel's terrain type, and hence, has to maintain a \emph{belief} (probability distribution) over the terrain type in each tunnel. The task is to navigate to the end of a tunnel while ensuring that the expected total cost is below a threshold of $5$. The agent has the initial belief of $0.5$ probability of rocks and $0.5$ probability of no rocks in tunnel $A$, and $0$ probability of rocks and $1.0$ probability of no rocks in tunnel $B$. 
\end{example}

In this example, suppose the agent receives an observation that leads to an updated belief of $0.8$ probability that tunnel $A$ is rocky. Intuitively, the agent should avoid tunnel $A$ since the expected cost of navigating it is $8$, which violates the cost constraint of $5$. However, an optimal policy computed from a C-POMDP decides to go through the rocky region, violating the constraint and damaging the wheels. Such behavior is justified in the C-POMDP framework by declaring that, due to a low probability of observing that tunnel $A$ is rocky in the first place, the expected cost from the initial time step is still within the threshold, and so this policy is admissible. However, this pathological behavior is clearly unsuitable especially for some (e.g., safety-critical) applications.

In this paper, we first provide the key insight that the pathological behavior is caused by the violation of the optimal substructure property over successive decision steps, and hence violation of the standard form of Bellman's Principle of Optimality (BPO). To mitigate the pathological behavior and preserve the optimal substructure property, we propose an extension of C-POMDPs through the addition of history-dependent cost constraints at each reachable belief, which we call Recursively-Constrained POMDPs (RC-POMDPs). We prove that deterministic policies are sufficient for optimality in RC-POMDPs and that RC-POMDPs satisfy BPO. These results suggest that RC-POMDPs are highly amenable to standard dynamic programming techniques, which is not true for C-POMDPs. RC-POMDPs provide a good balance between the BPO-violating expectation constraints of C-POMDPs and constraints on the worst-case outcome, which are overly conservative for POMDPs with inherent state uncertainty.
Then, we present a point-based dynamic programming algorithm to approximately solve RC-POMDPs. Experimental evaluation shows that the pathological behavior is a prevalent phenomenon in C-POMDP policies, and that our algorithm for RC-POMDPs computes polices which obtain expected cumulative rewards competitive with C-POMDPs without exhibiting such behaviors. 

In summary, this paper contributes (i) an analysis that C-POMDPs do not exhibit the optimal substructure property over successive decision steps and its consequences, (ii) the introduction of RC-POMDPs, a novel extension of C-POMDPs through the addition of history-dependent cost constraints, (iii) proofs that all RC-POMDPs have at least one deterministic optimal policy, satisfy BPO, and the Bellman operator has a unique fixed point under suitable initializations, (iv) a dynamic programming algorithm for RC-POMDPs, and (v) a series of illustrative benchmarks to demonstrate the advantages of RC-POMDPs.

\paragraph{Related Work}
Several solution approaches exist for C-POMDPs with expectation constraints \cite{Nijs2021cmdps-survey}. These include offline \cite{Isom2008PiecewiseLDP, kim2011cpbvi, Poupart2015calp, walraven2018cgcp, kalagarla22aNoRegret, Wray2022pga} and online methods \cite{Lee2018ccpomcp, Jamgochian_Corso_Kochenderfer_2023}. These works suffer from the unintuitive behavior discussed above. This paper shows that this behavior is rooted in the violation of optimal substructure by C-POMDPs and proposes a new problem formulation that obeys BPO.

BPO violation has also been discussed in fully-observable Constrained MDPs (C-MDPs) with state-action frequency and long-run average cost constraints \cite{HAVIV199625, CHONG2012108}.  
To overcome it, \citet{HAVIV199625} proposes an MDP formulation with sample path constraints. In C-POMDPs with expected cumulative costs, this BPO-violation problem remains unexplored. Additionally, adoption of the MDP solution of worst-case sample path constraints would be overly conservative for POMDPs, which are inherently characterized by state uncertainty. This paper fills that gap by studying the BPO of C-POMDPs and addressing it by imposing recursive expected cost constraints.

From the algorithmic perspective, the closest work to ours is the C-POMDP point-based value iteration (CPBVI) algorithm~\cite{kim2011cpbvi}. Samples of admissible costs, defined by \citet{PIUNOVSKIY2000} for C-MDPs, are used with belief points as a heuristic to improve computational tractability of point-based value iteration for C-POMDPs. However, since CPBVI is designed for C-POMDPs, the synthesized policies by CPBVI may still exhibit pathological behavior. In this paper, we formalize the use of history-dependent expected cost constraints and provide a thorough analysis of it. We show that this problem formulation eliminates the pathological behavior of C-POMDPs.
\section{Constrained POMDPs}

POMDPs model sequential decision making problems under transition uncertainty and partial observability. 

\begin{definition}[POMDP]
    \label{def:pomdp}
    A \emph{Partially Observable Markov Decision Process} (POMDP) is a tuple  $\mathcal{P} = (S, A, O, T, R, Z, \gamma, b_0)$, where:
    $S, A,$ and $O$ are finite sets of states, actions and observations, respectively, 
    $T : S \times A \times S \rightarrow [0,1]$ is the transition probability function,
    $R : S \times A \rightarrow [R_{min}, R_{max}]$, for $R_{min}, R_{max} \in \mathbb{R}$, is the immediate reward function,
    $Z : S \times A \times O \rightarrow [0,1]$ is the probabilistic observation function,
    $\gamma \in [0,1)$ is the discount factor,
    and $b_0 \in \Delta(S)$ is an initial belief, where $\Delta(S)$ is the probability simplex (the set of all probability distributions) over $S$.
\end{definition}
\noindent
We denote the probability distribution over states in $S$ at time $t$ by $b_t \in \Delta(S)$ and the probability of being in state $s$ at time $t$ by $b_t(s)$.

The evolution of an agent according to a POMDP model is as follows.
At each $t \in \mathbb{N}_0$, the agent has a belief $b_t$ of its state $s_t$ as a probability distribution over $S$ and takes action $a_t \in A$. Its state evolves from $s_t \in S$ to $s_{t+1} \in S$ according to $T(s_t,a_t,s_{t+1})$, and it receives an immediate reward $R(s_t,a_t)$ and observation $o_{t} \in O$ according to observation probability
$Z(s_{t+1}, a_t, o_{t})$. The agent then updates its belief 
using Bayes theorem;
that is for $s_{t+1} = s'$, 
\begin{align}
    \label{eq:bayes}
    b_{t+1}(s') \propto Z(s', a_t, o_{t}) \sum_{s \in S} T(s,a_t,s') b_{t}(s) .
\end{align}

Then, the process repeats. Let $h_{t} = \{a_0, o_0, \cdots, a_{t-1}, o_{t-1}\}$ denote the history of the actions and observations up to but not including time step $t$; thus, $h_0 = \emptyset$. The belief at time step $t$ is therefore $b_{t} = P(s_{t} \mid b_0, h_t)$. For readability, we do not explicitly include $b_0$, as all variables are conditioned on $b_0$.

The agent chooses actions according to a policy $\pi: \Delta(S) \to \Delta(A)$, which maps a belief $b$ to a probability distribution over actions. $\pi$ is called \emph{deterministic} if $\pi(b)$ is a unitary distribution for every $b \in \Delta(S)$. A policy is typically evaluated according to the expected rewards it accumulates over time.
Let $R(b,a) = \mathbb{E}_{s \sim b}[R(s,a)]$ be the expected reward for the belief-action pair $(b,a)$. 
The \textit{expected discounted sum of rewards} that the agent receives under policy $\pi$ starting from belief $b_t$ is
\begin{align}
    \label{eq: total reward}
    V_{R}^{\pi}(b_t) &= \mathbb{E}_{\pi, T, Z} \Big[ \sum_{\tau=t}^{\infty} \gamma^{\tau - t} R\left(b_{\tau}, \pi(b_{\tau})\right) \mid b_t \Big].
\end{align}
Additionally, the $Q$ reward-value is defined as
\begin{align}
    \label{eq:q-reward}
    Q^\pi_R(b_t,a)  &= R(b_t, a) + \gamma \, \mathbb{E}_{T,Z}[V^\pi_R(b_{t+1})].
\end{align}
The objective of POMDP problems is often to find a policy that maximizes $V_R^{\pi}(b_0)$.

As an extension of POMDPs, Constrained POMDPs add a constraint on the expected cumulative costs.
\begin{definition}[C-POMDP]
    \label{def:cpomdp}
    A \emph{Constrained POMDP} (C-POMDP) is a tuple $\mathcal{M} = (\mathcal{P}, C, \hat{c})$, where $\mathcal{P}$ is a POMDP as in Def.~\ref{def:pomdp}, 
    $C: S \times A \rightarrow \mathbb{R}^n_{\geq 0}$ is a cost function that maps each state action pair to an $n$-dimensional vector of non-negative costs, and
    $\hat{c} \in \mathbb{R}^n_{\geq 0}$ is an $n$-dimensional vector of expected cost thresholds from the initial belief state $b_0$.
\end{definition}

In C-POMDPs, by executing action $a \in A$ at state $s \in S$, the agent receives a cost vector $C(s,a)$ in addition to the reward $R(s,a)$. Let $C(b,a) = \mathbb{E}_{s \sim b}[C(s,a)]$.
The expected sum of costs incurred by the agent under $\pi$ from belief $b_t$ is:
\begin{align}
    \label{eq: total cost}
    V^{\pi}_C(b_t) = \mathbb{E}_{\pi, T, Z} \Big[ \sum_{\tau=t}^\infty \gamma ^{\tau - t} C(b_\tau, \pi(b_\tau)) \mid b_t \Big].
\end{align}
\noindent
Additionally, the $Q$ cost-value is defined as
\begin{align}
    \label{eq:q-cost}
    Q^\pi_C(b_t,a)  &= C(b_t, a) + \gamma \, \mathbb{E}_{T,Z}[V^\pi_C(b_{t+1})].
\end{align}

In C-POMDPs, the constraint $V^\pi_C(b_0) \leq \hat{c}$, where $\leq$ refers to the component-wise inequality, is imposed on the POMDP optimization problem as formalized below.
\begin{problem}[C-POMDP Planning Problem]
    \label{prob: cpomdp}
    Given a C-POMDP, compute policy $\pi^*$ that maximizes total expected reward in Eq.~\eqref{eq: total reward} from initial belief $b_0$ while the total expected cost vector in Eq.~\eqref{eq: total cost} is bounded by $\hat{c}$, i.e.,
    \begin{equation}
    \label{eq:original problem}
        \pi^* = \arg \max _{\pi} V_{R}^{\pi}(b_{0}) \quad \text { s.t. } \quad V_{C}^{\pi}\left(b_{0}\right) \leq \hat{c}.
    \end{equation}
\end{problem}

Unlike POMDPs that have at least one deterministic optimal policy
\cite{Sondik1978pomdp}, optimal policies of C-POMDPs may require randomization, and hence there may not exist an optimal deterministic policy \cite{kim2011cpbvi}.

Next, we discuss why the solutions to Problem~\ref{prob: cpomdp} may not be desirable and an alternate formulation is necessary.

\subsection{Optimal Substructure Property}

A problem has the optimal substructure property if \emph{an optimal solution to the problem contains optimal solutions to its subproblems} \citet{algorithmsbook2009cormen}. Additionally, \citeauthor{algorithmsbook2009cormen} note that these subproblems must be independent of each other. If this holds for Problem~\ref{prob: cpomdp}, then the optimal policy $\pi^*(b_0)$ at $b_0$ can be computed recursively by finding the optimal policy $\pi^*(h_t)$ for each successive history \emph{for the same planning problem}. Thus, a natural subproblem to Eq.~\eqref{eq:original problem} is the history-based subproblem $(\mathcal{M}, h_t)$, with $\pi^*(h_t) = \arg \max _{\pi} V_{R}^{\pi}(h_t) \text { s.t. } V_{C}^{\pi}\left(b_{0}\right) \leq \hat{c}$\footnote{Constraining $V_C^{\pi}(h_t) \leq \hat{c}$ also violates the property as the constraint is defined only at $b_0$.}. We show that this subproblem violates the optimal substructure property, which makes the employment of standard dynamic programming techniques difficult\footnote{Some approaches use dynamic programming (\cite{Isom2008PiecewiseLDP}, \cite{kim2011cpbvi}), but they do not find optimal policies.}.

Since the constraint of Eq.~\eqref{eq:original problem} is defined only at $b_0$, the subproblem at $h_t$ must consider the expected cumulative cost of the policy from $b_0$. It is not enough to compute the expected total cost obtained from $b_0$ to $h_t$, as an optimal cost-value from $h_t$ depends on cost-values of other subproblems. We illustrate this with an example. Consider the POMDP (depicted as a belief MDP) in Figure~\ref{fig:counterexample}, which is a simplified version of Example~\ref{ex:caveexample}. W.l.o.g., let $\gamma = 1$. The agent starts at $b_0$ with constraint $\hat{c} = 5$. Actions $a_A$ and $a_B$ represent going through tunnels A and B, and $r$ and $nr$ are the observations that tunnel A is rocky and not rocky, respectively.

    \begin{figure}[thb]
    \begin{minipage}{.49\linewidth}
    \begin{tikzpicture}[node distance=1.2cm, auto, every state/.style={circle, draw, minimum size=0.5cm}]
   \node[state] (b1) {$b_0$};
   \node[state, below left of=b1, rectangle] (a1) {$a_A$};
   \node[state, below left of=a1] (b2) {$b_1$};
   \node[state, below right of=a1] (b3) {$b_2$};
   \node[state, below right of=b1, rectangle] (a2) {$a_B$};
   \node[state, below left of=b2, rectangle] (a3) {$a_A$};
   \node[state, right of=a3, rectangle] (a4) {$a_B$};
    \node[state, below right of=b3, rectangle] (a6) {$a_B$};
    \node[state, left of = a6, rectangle] (a5) {$a_A$};
    \node[state, below of= a6] (b4) {$b_3$};
   \path
   (b1) edge[->] node{} (a2)
        edge[->]  node{} (a1)
    (a2) edge[bend left,->] node[right]{\tiny $1$} (b4)
   (a1) edge[->]  node[sloped, above]{\footnotesize $r\,$} node[sloped, below]{\footnotesize $0.5$} (b2)
   (a1) edge[->]  node[sloped, above]{\footnotesize $nr$} node[sloped, below]{\footnotesize $0.5$} (b3)
   (b2) edge[->]  node{} (a3)
        edge[->]  node{} (a4)
   (b3) edge[->]  node{} (a5)
        edge[->]  node{} (a6)
      (a3.south) edge[->]  node[left,below]{\tiny $1$} (b4)
   (a4.south) edge[->]  node[right]{\tiny $\;\;1$} (b4)
      (a5) edge[->]  node[right]{\tiny $1$} (b4)
   (a6) edge[->]  node[right]{\tiny {\!\!1}} (b4)
   (b4) edge[loop right] node {} (b4);
    \end{tikzpicture}
    \end{minipage}
    \qquad 
    \begin{minipage}{.4\linewidth}
    \renewcommand{\arraystretch}{1.3}
    \begin{tabular}{c|cc}
    $R$ & $a_A$ & $a_B$   \\
    \hline
    $b_0$ & 0 & 10 \\
    $b_1$ & 12  & 0 \\
    $b_2$ & 12  & 0 \\
    \end{tabular}
    
    \begin{tabular}{c|cc}
    $C$ & $a_A$ & $a_B$   \\
    \hline
    $b_0$ & 0 & 5 \\
    $b_1$ & 8 & 5 \\
    $b_2$ & 2  & 5
    \end{tabular}
    \end{minipage}
    \caption{Counter-example POMDP with associated reward and cost functions. The action at $b_3$ has $0$ reward and cost.}
    \label{fig:counterexample}
\end{figure}
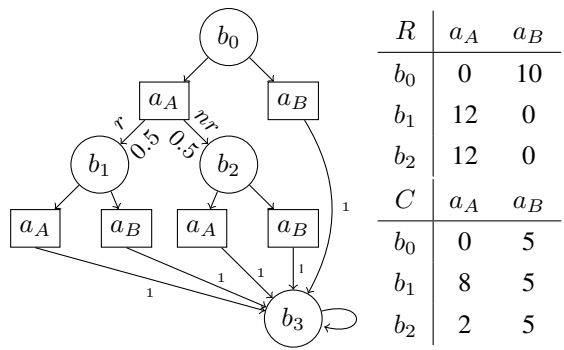

By examining the reward function, we see that action $a_A$ returns the highest reward everywhere except $b_0$. Action $a_B$ returns a higher reward at $b_0$. Let $\pi_A$ be the policy that chooses $a_A$ at every belief, and $\pi_B$ the one that chooses $a_B$ at $b_0$. 
The cost-values for these policies are
$V^{\pi_A}_C(b_0) = V^{\pi_B}_C(b_0) = 5 \leq \hat{c}$, and the reward-values are $V^{\pi_A}_R(b_0) = 12$, $ V^{\pi_B}_R(b_0) = 10$.
Note that both policies satisfy the constraint and any policy that chooses $a_B$ at $b_1$ or $b_2$, or that randomizes between $\pi_A$ and $\pi_B$ has value less than $V^{\pi_A}_R(b_0)$; hence, $\pi_A$ is the optimal policy. However, when planning at $b_1$, i.e., $h_1$, it is impossible to decide that $a_A$ is optimal without first knowing that action $a_A$ at $h_2$ incurs $2$ cost and is optimal. The decisions at $b_1$ and $b_2$ cannot be computed separately as subproblems.

To get around this dependence, we can include information about how much cost the policy incurs at other subproblems and how much cost policies can incur from $h_t$, obtaining a \emph{policy-dependent} subproblem $(\mathcal{M}, h_t, \pi)$. This subproblem definition exhibits the optimal substructure property only if we relax the restriction of subproblems being independent. Nonetheless, the optimal solution to a subproblem $(\mathcal{M}, h_t, \pi)$ is only guaranteed to be optimal for the full problem if an optimal policy $\pi^*$ is already provided.

\subsubsection{Pathological Behavior
: Stochastic Self-Destruction}

A main consequence of history-dependent subproblems violating the optimal substructure property and instead requiring policy-dependent subproblems is that optimal policies may exhibit unintuitive behaviors during execution.

In the above example, the optimal policy from $b_0$ first chooses action $a_A$. Suppose that $h_1$ is reached. The cost constraint at $b_1$ remains at $5$ since no cost has been incurred. However, the optimal C-POMDP policy chooses action $a_A$ and incurs a cost of $8$ which violates the constraint, even though there is another action, $a_B$, that incurs a lower expected cost that satisfies the constraint. Therefore, in $50\%$ of executions, when $h_1$ is reached, the agent intentionally violates the cost constraint to get higher expected rewards, even if a policy that satisfies the cost constraint exists. We term this pathological behavior \emph{stochastic self-destruction}.

This unintuitive behavior is mathematically correct in the C-POMDP framework because the policy still satisfies the constraint at the initial belief state on expectation. An optimal C-POMDP policy exploits the nature of the constraint in Eq.~\eqref{eq:original problem}
to intentionally violate the cost constraint for some belief trajectories. A concrete manifestation of this phenomenon is in the stochasticity of the optimal policies for C-POMDPs. These policies randomize between deterministic policies that violate the expected cost threshold but obtain higher expected reward, and those that satisfy the cost threshold but obtain lower expected reward.

Another consequence is a mismatch between optimal policies planned from a current time step and optimal policies planned at future time steps. In the example in Figure~\ref{fig:counterexample}, if re-planning is conducted at $b_1$, the re-planned optimal policy selects $a_B$ instead of $a_A$. In fact, the policy that initially takes $a_B$ at $b_0$ achieves a higher expected reward than the original policy that takes $a_A$ at $b_0$ and re-plans at future time steps. This phenomenon can therefore lead to poor performance of the closed-loop system during execution.

\begin{remark}
    We remark that the pathological behavior arises due to the C-POMDP problem formulation, and not the algorithms designed to solve C-POMDPs. Further, this issue cannot be addressed by simply restricting solutions to deterministic policies since they also exhibit the pathological behavior, as seen in the example in Figure~\ref{fig:counterexample}.
\end{remark}
\section{Recursively-Constrained POMDPs}
To mitigate the pathological behaviors and obtain a (policy-independent) optimal substructure property, we aim to align optimal policies computed at a current belief with optimal policies computed at future (successor) beliefs. We propose a new problem formulation called Recursively-Constrained POMDP (RC-POMDP), which imposes additional recursively defined constraints on a policy. 

An RC-POMDP has the same tuple as a C-POMDP, but with recursive constraints on beliefs at future time steps. 
These constraints enforce that a policy must satisfy a history dependent cumulative expected cost constraint at every future belief state. Intuitively, 
we bound the cost value at every belief such that the constraint in the initial node is respected.

The expected cumulative cost of the trajectories associated with history $h_t$ is given as:
\begin{align}
    \label{eq: W}
    W(h_t) = \sum_{\tau=0}^{t-1}  \gamma^{\tau}  \mathbb{E}_{s_\tau \sim b_\tau}\left[C(s_\tau, a_\tau) \mid h_\tau \right].
\end{align}
We can direct the optimal policy at each time step $t$ by imposing that the total expected cumulative cost satisfies the initial cost constraint $\hat{c}$
. For a given $h_{t}$ and its corresponding $b_{t}$, the expected cumulative cost at $b_0$ is given by:
\begin{align}
    V^\pi_{C \mid h_t}(b_0) = W(h_{t}) + \gamma^{t}V_{C}^{\pi}(b_{t}).
\end{align}
Therefore, the following constraint should be satisfied by a policy $\pi$ at each future belief state:
\begin{align}
    \label{eq:precursor constraints}
    W(h_{t}) + \gamma^{t}V_{C}^{\pi}&\left(b_{t} \right) \leq \hat{c}.
\end{align}

We define the admissibility of a policy $\pi$ accordingly.
\begin{definition}[Admissible Policy]
    A policy $\pi$ is \emph{k-admissible} for a $k \in \mathbb{N}_0 \cup \{\infty \}$ if $\pi$ satisfies Eq.~\eqref{eq:precursor constraints} for all $t \in \{0, \ldots, k-1\}$ and all histories $h_t$ of length $t$ induced by $\pi$ from $b_0$. A policy is called \emph{admissible} if it is $\infty$-admissible.
\end{definition}

Since RC-POMDP policies are constrained based on history, it is not sufficient to directly use belief-based policies. Thus, we consider history-based policies in this work. A history-based policy maps a history $h_t$ to a probability over actions $\Delta(A)$.

The RC-POMDP optimization problem is formalized below.

\begin{problem}[RC-POMDP Planning Problem]
    \label{prob: rcpomdp}
    Given a C-POMDP and an admissibility constraint $k \in \mathbb{N} \cup \{\infty\}$, compute optimal policy $\pi^*$ that is k-admissible, i.e., $\forall h_t$,
        \begin{align}
            &\pi^*(h_t) =  \arg\max _{\pi} V_{R}^{\pi}(h_t) \label{eq: reward objective} \\ 
            & \text { s.t.}\;\; W(h_{t}) + \gamma^{t}V_{C}^{\pi}\left(b_t\right) \leq \hat{c}  \;\; \forall t \in \{0, \dots, k-1\}  \label{eq:pre-recursive constraints}.
        \end{align}
\end{problem}

Note that Problem~\ref{prob: rcpomdp} is an infinite-horizon problem since the optimization objective~\eqref{eq: reward objective} is infinite horizon. The admissibility constraint $k$ is a user-defined parameter. In this work, we focus on $k = \infty$, i.e., admissible policies. 

\begin{remark}
    In POMDPs, reasoning about cost is done on expectation due to state uncertainty.
    C-POMDPs bound the expected total cost of state trajectories, enabling belief trajectories with low expected costs to compensate for those with high expected costs. Conversely, a worst-case constraint formulation of the problem, which never allows any violations during execution, may be overly conservative. RC-POMDPs strike a balance between the two; it bounds the expected total cost for all belief trajectories, only allowing cost violations during execution due to state uncertainty.
\end{remark}
\section{Theoretical Analysis of RC-POMDPs}

We first transform Eq. \eqref{eq:pre-recursive constraints} into an equivalent recursive form that is better suited for policy computation, e.g., tree search and dynamic programming. By rearranging Eq. \eqref{eq:pre-recursive constraints},  $V^\pi_C(b_t) \leq \gamma^{-t}\cdot(\hat{c} - W(h_{t})) $.
Based on this, we define the \textit{history-dependent admissible cost bound} as:
\begin{align}
    d(h_{t}) = {\gamma^{-t}} \cdot (\hat{c} - W(h_{t})),
\end{align}
which can be computed recursively:
\begin{equation}
    \label{eq:history dependent cost recursive}
    d(h_0) = \hat{c}, \quad  d(h_{t+1}) = {\gamma^{-1}} \cdot \big( d(h_t) - C(b_t,a_t) \big).
\end{equation}
\noindent
Then, Problem~\ref{prob: rcpomdp} can be reformulated with recursive bounds.
\begin{proposition}
    Problem~\ref{prob: rcpomdp} can be rewritten as:
    \begin{equation}
    \label{eq:recursive constraints}
    \begin{split}
        \pi^* &=  \arg\max _{\pi} V_{R}^{\pi}(b_{0}) \\ 
        & \text { s.t. } \quad V_{C}^{\pi} (b_t) \leq d(h_{t}) \quad  \forall t \in \{0, 1, \dots, k-1\}, 
    \end{split}
    \end{equation}    
    where $d(h_t)$ is defined recursively in Eq.~\eqref{eq:history dependent cost recursive}.
\end{proposition}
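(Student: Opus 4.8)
The plan is to show that Problem~\ref{prob: rcpomdp} and the program in Eq.~\eqref{eq:recursive constraints} have identical objectives and identical feasible sets, so their optimizer sets coincide. Since the reward objective $V_R^\pi(b_0)$ is untouched between the two formulations, it suffices to prove that for every policy $\pi$ the constraint family \eqref{eq:pre-recursive constraints} holds if and only if the constraint family $V_C^\pi(b_t) \le d(h_t)$ holds, where $d$ is the object produced by the recursion \eqref{eq:history dependent cost recursive}. I would carry this out in two steps: first a reversible pointwise rearrangement of each constraint, and then a verification that the closed form $\gamma^{-t}(\hat c - W(h_t))$ is exactly what the recursion generates.

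First, fix a time step $t$ and a history $h_t$ reachable under $\pi$, with corresponding belief $b_t$. Assuming $\gamma > 0$ so that $\gamma^t > 0$, dividing the constraint $W(h_t) + \gamma^t V_C^\pi(b_t) \le \hat c$ through by $\gamma^t$ yields the equivalent inequality $V_C^\pi(b_t) \le \gamma^{-t}(\hat c - W(h_t))$. Because $\gamma^t > 0$, this step is reversible, so the two inequalities cut out the same set of policies at $(t, h_t)$; quantifying over all $t \in \{0,\dots,k\}$ and all reachable $h_t$ preserves the equivalence of the entire constraint families. Setting $d(h_t) := \gamma^{-t}(\hat c - W(h_t))$ then converts \eqref{eq:pre-recursive constraints} into $V_C^\pi(b_t) \le d(h_t)$, which is precisely the constraint appearing in \eqref{eq:recursive constraints}.

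Second, I would confirm that this closed form coincides with the recursion in Eq.~\eqref{eq:history dependent cost recursive} by induction on $t$. For the base case, $h_0 = \emptyset$ gives $W(h_0) = 0$ and $\gamma^0 = 1$, hence $d(h_0) = \hat c$, matching the recursion. For the inductive step, I would use the one-step decomposition of $W$ that follows immediately from its definition \eqref{eq: W}, namely $W(h_{t+1}) = W(h_t) + \gamma^t \mathbb{E}_{s_t \sim b_t}[C(s_t,a_t) \mid h_t]$. Substituting this into $d(h_{t+1}) = \gamma^{-(t+1)}(\hat c - W(h_{t+1}))$ and factoring out $\gamma^{-1}$ gives $d(h_{t+1}) = \gamma^{-1}\big(d(h_t) - \mathbb{E}_{s_t \sim b_t}[C(s_t,a_t)\mid h_t]\big)$, which is exactly the recursive rule. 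This establishes that the recursively defined $d$ equals the closed form used in step one.

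This argument is fundamentally a reversible algebraic identity, so I do not expect a substantive obstacle; the only points needing care are bookkeeping. The chief caveat is the degenerate case $\gamma = 0$, where $\gamma^{-t}$ is undefined for $t \ge 1$: there the recursion must be read as vacuous beyond the first step and the problem collapses to a one-step cost constraint, so I would either restrict attention to $\gamma \in (0,1)$ or handle this boundary case explicitly. I would also be explicit that the quantification runs over histories reachable under the policy in question, so that both sides are indexed over the same constraints, and that $W(h_t)$ together with the conditional expectations are all taken under the same conditioning on $b_0$ held fixed throughout Problem~\ref{prob: rcpomdp}.
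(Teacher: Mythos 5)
Your proposal is correct and follows essentially the same route as the paper: the paper's own justification is exactly the pointwise division of Eq.~\eqref{eq:pre-recursive constraints} by $\gamma^{t}$ to get $V_C^\pi(b_t) \le \gamma^{-t}(\hat c - W(h_t))$, followed by identifying this closed form with the recursion in Eq.~\eqref{eq:history dependent cost recursive}, which your induction verifies explicitly. Your added care about reversibility of the rearrangement and about the degenerate case $\gamma = 0$ (which the paper's definition $\gamma \in [0,1)$ technically permits but its use of $\gamma^{-t}$ implicitly excludes) goes slightly beyond the paper's informal derivation but does not change the argument.
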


\paragraph*{Optimality of Deterministic Policies}
Here, we show that deterministic policies suffice for optimality in RC-POMDPs.

\begin{theorem}
    \label{thm:deterministic rcpomdp}
    An RC-POMDP with admissibility constraint $k = \infty$ has at least one deterministic optimal policy if an admissible policy exists.
\end{theorem}

A proof is provided in the Appendix. The main intuition is that we can always construct an optimal deterministic policy from an optimal stochastic policy. That is, at every history in which the policy has stochasticity, we can construct a new admissible policy that achieves the same reward-value while remaining admissible by deterministically choosing one of the stochastic actions at that history. We obtain a deterministic optimal policy by inductively performing this determinization at all reachable histories.

\textbf{Satisfaction of Bellman's Principle of Optimality \quad}
Here, we show that RC-POMDPs satisfy BPO with a policy-independent optimal substructure.

\begin{proposition}[Belief-Admissible Cost Formulation]
    \label{prop: markovian}
    An RC-POMDP belief $b_t$ with history dependent admissible cost bound $d(h_t)$ can be rewritten as an augmented belief-admissible cost state $\bar{b}_t = (b_t, d(h_t))$.
    Further, the augmented $Q$-values for a policy can be written as:
    \begin{align*}
        Q_R^\pi((b_t, d(h_t)), a) &= R(b_t,a) + \gamma \, \mathbb{E}[V^\pi_R((b_{t+1}, d(h_{t+1}))],\\
        Q_C^\pi((b_t, d(h_t)), a) &= C(b_t,a) + \gamma \, \mathbb{E}[V^\pi_C((b_{t+1}, d(h_{t+1}))].
    \end{align*}
\end{proposition}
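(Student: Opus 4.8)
The plan is to show that the pair $\bar b_t = (b_t, d(h_t))$ is a bona fide Markov state for an augmented belief MDP, and that once this is established the two $Q$-value identities are nothing more than the standard Bellman decomposition written over this augmented state space. The essential point to appreciate is that two distinct histories $h_t, h_t'$ may induce the \emph{same} belief $b_t$ yet \emph{different} accumulated costs $W(h_t) \ne W(h_t')$, and hence different bounds $d(h_t) \ne d(h_t')$; the belief alone is therefore not a sufficient statistic for the constrained problem, whereas the augmented pair $(b_t, d(h_t))$ will be.

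First I would observe that the belief is a sufficient statistic for the history, so the history-conditioned immediate expected cost appearing in the recursion of Eq.~\eqref{eq:history dependent cost recursive} satisfies $\mathbb{E}_{s_t \sim b_t}[C(s_t, a_t) \mid h_t] = C(b_t, a_t)$. Substituting this into that recursion gives $d(h_{t+1}) = \gamma^{-1}\big( d(h_t) - C(b_t, a_t) \big)$, which depends only on $b_t$, $d(h_t)$, and the chosen action $a_t$ — and, crucially, is independent of the observation $o_t$. Consequently, the $d$-coordinate is shared across all observation branches emanating from a belief--action node and is a deterministic function of $(b_t, d(h_t), a_t)$.

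Next I would assemble the augmented transition kernel. Given $\bar b_t = (b_t, d(h_t))$ and action $a_t$, the belief coordinate updates by the usual Bayesian rule of Eq.~\eqref{eq:bayes} upon receiving $o_t$ with probability $P(o_t \mid b_t, a_t)$, while the bound coordinate updates deterministically as above. This defines a stochastic transition over augmented states that depends on the past only through $\bar b_t$ and $a_t$; hence $\bar b_t$ is Markov. Because the reward $R(b_t, a)$ and cost $C(b_t, a)$ depend only on the belief coordinate, the augmented process is a well-defined belief MDP, and $V_R^\pi, V_C^\pi$ may be regarded as functions of $\bar b_t$. Applying the one-step Bellman decomposition on this augmented MDP, with the expectation taken over $o_t$ (equivalently over the successor belief $b_{t+1}$) and $d(h_{t+1})$ supplied deterministically, yields exactly the claimed forms for $Q_R^\pi((b_t, d(h_t)), a)$ and $Q_C^\pi((b_t, d(h_t)), a)$; the latter matches Eq.~\eqref{eq: q-cost} with the bound merely carried along as an extra coordinate.

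I expect the only real obstacle to be the rigorous verification of the Markov property, i.e., arguing that no information beyond $(b_t, d(h_t), a_t)$ is needed to determine the distribution of the successor augmented state. This reduces to the two facts isolated above: the sufficiency of the belief, which collapses the history-conditioned expected cost to $C(b_t, a_t)$, and the observation-independence of the $d$-update. Once these are in hand, the $Q$-value identities follow by a routine unrolling of the Bellman equation and require no further calculation.
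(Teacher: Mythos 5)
Your proposal is correct and takes essentially the same route as the paper: the paper proves this claim (inside the proof of Theorem 3 in Appendix A) by defining $\mathrm{d}_t$ via the recursion $\mathrm{d}_{t+1} = \gamma^{-1}(\mathrm{d}_t - C(b_t,a_t))$, noting $\mathrm{d}_t = d(h_t)$, and showing $P(\bar{b}_{t+1} \mid \bar{b}_t, a_t, o_t, h_t) = P(\bar{b}_{t+1} \mid \bar{b}_t, a_t, o_t)$ by splitting into the case where the $d$-coordinate follows the deterministic update (probability equal to the standard belief transition) and the case where it does not (probability zero) --- which is precisely your observation that the bound coordinate is a deterministic, observation-independent function of $(b_t, d(h_t), a_t)$ while the belief coordinate follows the usual Bayesian update. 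The $Q$-value identities then follow in both treatments as the standard Bellman decomposition on the resulting augmented belief MDP.
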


We first see that the evolution of $\bar{b}_t$ is Markovian, i.e., 
    \begin{align*}
        &P(\bar{b}_{t+1}  \mid \bar{b}_{t}, a_t, o_t, h_t)
              \\
             &=\begin{cases}
                P(b_{t+1} \mid b_{t}, a, o_t, h_t)  &    \text{if } d(h_{t+1}) =  \frac{(d(h_t) - C(b_t, a_t))}{\gamma} \\
                0   & \text{otherwise,}
             \end{cases}
    \end{align*}
    thus, $P(\bar{b}_{t+1} \mid \bar{b}_{t}, a_t, o_t, h_t) = P(\bar{b}_{t+1} \mid \bar{b}_{t}, a_t, o_t)$.

Here, we use the policy iteration version of Bellman equation, but a similar argument can be made for value iteration.
\begin{theorem}
    \label{thm: bpo rcpomdp}
    Fix $\pi$. Let $V^\pi = (V_R^\pi, V_C^\pi)$ be reward- and cost-value function for $\pi$. The Bellman operator $\mathbb{B}$ for policy $\pi$ for an RC-POMDP is given by, $\forall \bar{b}_t$,
    \begin{align}
        \label{eq:rcbackup}
        &\mathbf{a} = \argmax_{a \in A}\Big[Q_R^{\pi}(\bar{b}_t, a) \mid
        Q^{\pi}_C(\bar{b}_t, a) \leq d(h_t)\Big] \\
        &\mathbb{B}[V^\pi](\bar{b}_t) \triangleq \begin{cases}
            \big(Q_R^{\pi}(\bar{b}_t, a), Q_C^{\pi}(\bar{b}_t, a)\big), \, a \in \mathbf{a}  & \text{ if } \mathbf{a} \neq \emptyset, \\
            \big(V_R^{\pi}(\bar{b}_t), (\infty, \ldots, \infty)\big) & \text{ if } \mathbf{a} = \emptyset,
        \end{cases} \nonumber
    \end{align}
    Assume an admissible policy exists for the RC-POMDP with admissibility constraint $k = \infty$. Let $V^{\pi^*} = (V_R^{\pi^*}, V_C^{\pi^*})$ be the values for an optimal admissible policy $\pi^*$, and we obtain a new policy $\pi'$ with $(V_R^{\pi'}, V_C^{\pi'}) = \mathbb{B}[V^{\pi^*}]$. $\pi^*$ satisfies the BPO criterion of an admissible optimal policy:
    \begin{align}
    \label{eq:bpo optimality}
    V_R^{\pi'}(\bar{b}_t) &= V_R^{\pi^*}(\bar{b}_t) \qquad \forall \bar{b}_t,\\
    V_C^{\pi'}(b_t) &\leq d(h_t) \qquad \quad \, \forall \bar{b}_t \in \textsc{Reach}^{\pi'}\!(\bar{b}_0),
    \end{align}
    where $\textsc{Reach}^\pi(\bar{b}_0)$ is the set of augmented belief states reachable from $b_0$ under policy $\pi$.
\end{theorem}

This theorem shows that an optimal policy remains admissible and optimal w.r.t rewards after applying $\mathbb{B}$ on a policy independent value function $V$. Note that $V_C^*$ is not unique as there may be multiple optimal cost-value functions for an optimal $V_R^*$. Next, we show that $\mathbb{B}$ is a contraction over reward-values for a suitably initialized value function, which is one that defines the space of admissible policies.

\begin{theorem}
    \label{thm: fixed point rcpomdp}
    For each $\bar{b}_t$, define $\Phi(\bar{b}_t)$ as the set of admissible policies from $\bar{b}_t$:
    \begin{align}
        \Phi(\bar{b}_t) = \{\pi \;|\; V^{\pi}_C(b_\tau) \leq d(h_\tau) \;\;\forall \tau \geq t\}.
    \end{align}

    $V^{\pi^0}$ is a well behaved initial value function if the following holds for all $\bar{b}_t$. If $\Phi(\bar{b}_t) = \emptyset$, $V^{\pi^0}_C(\bar{b}_t) = (\infty, \ldots, \infty)$. If $\Phi(\bar{b}_t) \neq \emptyset$, $V_C^{\pi^0}(\bar{b}_t) \leq d(h_t)$.

    Suppose that $V^{\pi^0}$ is well behaved, then $\mathbb{B}^{n}[(V_R^{\pi^0}, V_C^{\pi^0})]_{x} \rightarrow (V_R^{\pi^*}, V_C^{\pi^n})$ as $n \rightarrow \infty$. That is, starting from $\pi^0$, $\mathbb{B}$ is a contraction on $V_R$ and $V_R^{\pi^*}$ is a unique fixed point.
\end{theorem}

Proofs of all results are provided in the Appendix. Theorems~\ref{thm:deterministic rcpomdp}-\ref{thm: fixed point rcpomdp} show that it is sufficient to search in the space of deterministic policies for an optimal one, and the policy-independent optimal substructure of RC-POMDPs can be exploited to employ dynamic programming for an effective and computationally efficient algorithm for RC-POMDPs. Further, Theorem~\ref{thm: fixed point rcpomdp} shows that determining policy admissibility is essential for effective dynamic programming. These results also indicate that optimal policies for RC-POMDPs do not exhibit the same pathological behaviors as C-POMDPs.

\section{Dynamic Programming for RC-POMDPs}

With the theoretical foundation above, we devise a first attempt at an algorithm that approximately solves Problem~\ref{prob: rcpomdp} with scalar cost and admissibility constraint $k = \infty$. We leave the multi-dimensional and finite $k$ cases for future work. The algorithm is called Admissibility Recursively Constrained Search (ARCS). ARCS takes advantage of the Markovian property of the belief-admissible cost formulation in Proposition~\ref{prop: markovian}, and Theorems~\ref{thm:deterministic rcpomdp}-\ref{thm: fixed point rcpomdp} to utilize point-based dynamic programming in the space of deterministic and admissible policies, building on unconstrained POMDP methods \citep{shani2013survey}.

ARCS is outlined in Algorithm~\ref{alg:RCPBVI}. It takes as input the RC-POMDP $\mathcal{M}$ and $\epsilon > 0$, a target error between the computed policy and an optimal policy at $b_0$. ARCS explores the search space by incrementally sampling points in the history space. These points form nodes in a policy tree $T$. At each iteration, a \texttt{SAMPLE} step expands a sequence of points starting from the root. Then, a Bellman \texttt{BACKUP} step is performed for each sampled node. Finally, a \texttt{PRUNE} step removes sub-optimal nodes. These three steps are repeated until an admissible $\epsilon$-optimal policy is found. Pseudocode for \texttt{SAMPLE}, \texttt{BACKUP} and \texttt{PRUNE} are provided in the appendix.

\begin{algorithm}[t]
    \caption{Anytime Recursively Constrained Search}
    \label{alg:RCPBVI}
    \texttt{ARCS($\mathcal{M}, \epsilon$)}
    \begin{algorithmic}[1]
        \STATE Initialize cost-minimizing policy $\hat{\pi}_{c}^{min} = \Gamma_{c}^{min}$
       \STATE $(\alpha_r, \alpha_c) \gets  \argmin_{(\alpha_r, \alpha_c) \in \Gamma_{c_{min}}} \alpha_r^T b_0$
        \STATE $\lowervalue \gets \alpha_r^T b_0, \uppercost \gets \alpha_c^T b_0$
        \STATE Initialize $\uppervalue$ and $\lowercost$ for $b_0$ with FIB
        \STATE Initialize $k_0$ with Eq.~\eqref{eq: k admissible guarantee}-\eqref{eq: infinite admissibility}
        \STATE $T \gets v_0 = (b_0, \hat{c}, k_0, \uppervalue, \lowervalue, \uppercost, \lowercost, \emptyset, \emptyset, \emptyset, \emptyset)$
        \REPEAT
        \STATE $B_{sam} \gets$ \texttt{SAMPLE}($\epsilon$).
        \FORALL{$v \in B_{sam}$}
            \STATE \texttt{BACKUP}($v$)
        \ENDFOR
        \STATE \texttt{PRUNE}($B_{sam}$)
        \UNTIL{termination conditions are satisfied}
        \STATE \textbf{return} $T, \Gamma_{c_{min}}$
    \end{algorithmic}
\end{algorithm}

\textbf{Policy Tree Representation \quad} 
We represent the policy with a policy tree $T$. A node in $T$ is a tuple $v = (b, d, k, \uppervalue, \lowervalue, \uppercost, \lowercost, \upperQvalue, \lowerQvalue, \upperQcost, \lowerQcost)$, where $b$ is a belief, $d$ is a history-dependent admissible cost bound, $k$ is a lower bound on admissible horizon, $\uppervalue$ and $\lowervalue$ are the two-sided bounds on reward-values, $\uppercost$ and $\lowercost$ are the two-sided cost-value bounds, $\upperQvalue, \lowerQvalue$ represent two-sided bound on $Q$ reward-value, and $\lowerQcost, \upperQcost$ represent the two-sided bounds on $Q$ cost-value. The root of $T$ is the node $v_0$ with $b = b_0$, $d = \hat{c}$, and admissible horizon lower bound $k_0$.

From Theorem~\ref{thm: fixed point rcpomdp}, a key aspect of effective dynamic programming for RC-POMDPs is computing admissible policies. This can be approximated by minimizing $V_{C}(\hat{b}_t)$. As a pre-processing step, we first approximate a minimum cost-value policy $\pi_{c}^{min} = \arg\inf_{\pi} V^\pi_{C}$. An arbitrarily tight under-approximation (upper bound) $\hat{\pi}_{c}^{min}$ as a set of $|S|$-dimensional hyperplanes, called $\alpha$-vectors, can be computed efficiently with a POMDP algorithm \citet{hauskrecht2000value}. The reward-values obtained by $\hat{\pi}_{c}^{min}$ is also a lower bound on the optimal reward-value. Thus, $\hat{\pi}_{c}^{min}$ is represented by a set of $\alpha$-vector pairs $(\alpha_r, \alpha_c) \in \Gamma_{C}^{min}$. $\hat{\pi}_{c}^{min}$ is used to initialize our policy, and is used from leaf nodes of $T$.


To initialize a new node, belief $b'$ is computed with Eq. \eqref{eq:bayes}, and $d'$ is computed recursively with Eq. \eqref{eq:history dependent cost recursive}. We initialize $\lowervalue$ and $\uppercost$ with $\hat{\pi}_{c}^{min}$, and initialize $\lowercost$ and $\uppervalue$ independently using the Fast Informed Bound (FIB) \cite{hauskrecht2000value}, and $k'$ is a lower bound of the admissible horizon.

\textbf{Admissible Horizon Lower Bound \quad}
\label{sec:admissible horizon} It is computationally intractable to exhaustively search the possibly infinite policy space. Thus, we maintain a lower bound on the admissible horizon of the policy for every node. It is used to compute admissibility beyond the current search depth of the tree, and to improve search efficiency via pruning. To initialize the admissible horizon guarantee of a leaf node, we compute a lower bound on the admissible horizon $k$ when using $\hat{\pi}_{c_{min}}$.

\begin{lemma}
    \label{lemma:k-admissible}
    Let the maximum $1$-step cost $C_{max}$ that $\hat{\pi}_{c}^{min}$ incurs at each time step across the entire belief space be
    $C_{max} = \max_{b \in B}C(b,\hat{\pi}_{c}^{min}(b)).$ Then, for a node $v$, if $v.d < 0$, then $k = 0$. For a leaf node $v$ with $v.d \geq 0$, $\hat{\pi}_{c}^{min}$ is at least $k$-admissible with
    \begin{align}
        \label{eq: k admissible guarantee}
        k = \big\lfloor \log \big(1 - ({v.d}/{C_{max}}) \cdot (1-\gamma)\big) / \log(\gamma) \big\rfloor,
    \end{align}
    and $\hat{\pi}_{c}^{min}$ is admissible from history $h$ if
    \begin{align}
        \label{eq: infinite admissibility}
         {C_{max}}/{(1-\gamma)} \leq v.d
         \text{\;\; or \;\;} v.\overline{V}^{\hat{\pi}_{c}^{min}}_C = 0 \leq v.d.
    \end{align}
\end{lemma}
A proof is provided in the appendix.
This lemma provides sufficient conditions for admissibility of computed policies. We compute an upper bound on the parameter $C_{max}$,
\begin{align}
    C_{max} \leq V_{C, max}^{\hat{\pi}_{c}^{min}} = \max_{b \in \Delta(S)} \min_{(\alpha_r, \alpha_c) \in \Gamma_{c_{min}}}\alpha_c^T b,
\end{align}
where $\alpha_c$ refers to a cost $\alpha$-vector. This can be solved efficiently with the maximin LP \cite{williams1990model}:
\begin{align}
    \label{eq:maximinLP}
    \max_{z, b} \, z \;\; \text{ s.t. } \;\;\;
\alpha_c^T b \geq z, \;\; (\alpha_r, \alpha_c) \in \Gamma, \;\; b \in \Delta(S).
\end{align}

\textbf{Sampling \quad}
ARCS uses a mixture of random sampling \cite{spaan2005perseus} and heuristic search (SARSOP) \cite{Kurniawati-RSS08-SARSOP}. Our empirical evaluations suggest that this approach is an effective balance between finding policies with high cumulative reward and that are admissible. At each \texttt{SAMPLE} step, ARCS expands the search space from the root of $T$, with either heuristic sampling or random sampling. For heuristic sampling, we use the same sampling strategy and sampling termination condition as SARSOP. It works by choosing actions with the highest $\upperQvalue$, and observations that have the largest contribution to the gap at the root of $T$, and sampling terminates based on a combination of selective deep sampling and a gap termination criterion. With random sampling, actions and observations are chosen randomly while traversing the tree until a new node is reached and added to the tree. Sampled points are chosen for \texttt{BACKUP}. 

\textbf{Backup \quad}
The \texttt{BACKUP} operation at node $v$ updates the values in the node by back-propagating the information of the children of $v$ back to $v$. First, the values of $\upperQvalue, \lowerQvalue$ and $\upperQcost$ are computed for each action using Eq.~\eqref{eq:q-reward} and Eq.~\eqref{eq:q-cost} for rewards and costs, respectively. Then, an RC-POMDP backup Eq.~\eqref{eq:rcbackup} is used to update $\uppervalue, \lowervalue, \lowercost, \uppercost$. The action selected to update $\lowervalue$ is used to update $k$ by back-propagating the minimum $k$ of all children. If no actions are feasible, all current policies from that node are inadmissible, and we update the reward- and cost-values using the action with the minimum $Q$-cost value, and set $k = 0$.

\textbf{Pruning \quad} To keep the size of $T$ small and improve tractability, we prune nodes and node-actions that are suboptimal, using the following criteria. First, for each node $v \in B_{sam}$, if $v.\lowercost > v.d$, no admissible policies exist from $v$, so $v$ and its subtree are pruned. Next, we prune actions as follows. Let $k(v,a)$ be the admissible horizon guarantees of the successor nodes from taking action $a$ at node $v$. Between two actions $a$ and $a'$, if $k(v,a') = \infty \text{ and } v.\upperQvalue(a) < v.\lowerQvalue(a')$, we prune the node-action $(v,a)$ (disallow taking action $a$ at node $v$), since action $a$ can never be taken by the optimal admissible policy. Next, if all node-actions $(v,a)$ are pruned, $v$ is also pruned. Finally, the node-action $(v,a)$ is pruned if any successor node from taking $a$ at $v$ is pruned. Nodes and node-actions that are pruned are not chosen during action and observation selection during \texttt{SAMPLE} and \texttt{BACKUP}.

\begin{proposition}
    \texttt{PRUNE} only removes sub-optimal policies.
\end{proposition}

\paragraph*{Termination Condition}
ARCS terminates when two conditions are met, (i) when it finds an admissible policy, i.e., $v_0.k = \infty$, which is when all leaf nodes $v_{leaf}$ reachable under a policy satisfy Eq.~\eqref{eq: infinite admissibility}, and (ii) when it finds an $\epsilon$-optimal policy, i.e., when the gap criterion at the root is satisfied, that is when $v_0.\uppervalue - v_0.\lowervalue \leq \epsilon$.

\begin{remark}
    ARCS can be modified to work in an anytime fashion given a time limit, and output the best computed policy and its admissible horizon guarantee $v_0.k$.
\end{remark}

\subsection{Algorithm Analysis}

Here, we analyze the theoretical properties of ARCS.

\begin{lemma}[Bound Validity]
    \label{lemma:validity}
     Given an RC-POMDP with admissibility constraint $k = \infty$, let $T$ be the policy tree after some iterations of ARCS. Let $V_R^*$ be the reward-value of an optimal admissible policy. At every node $v$ with $\bar{b} = (b, d)$ and admissible horizon guarantee $v.k = \infty$,
     it holds that: \\
     $v.\lowervalue \leq V_R^*(\bar{b}) \leq v.\uppervalue$ \;\; and \;\; $ v.\uppercost \leq d.$
\end{lemma}

\begin{theorem}[Soundness]
    \label{thm:soundness}
    Given an RC-POMDP with admissibility constraint $k = \infty$ and $\epsilon$, if ARCS terminates with a solution, the policy is admissible and $\epsilon$-optimal.
\end{theorem}

ARCS is not complete. It may not terminate, due to conservative computation of admissible horizon and needing to search infinitely deep to find admissible policies for some problems. However, ARCS can find $\epsilon$-optimal admissible policies for many problems, such as the ones in our evaluation. These are problems where a finite depth is sufficient to compute admissibility even with conservatism. We leave the analysis of such classes of RC-POMDPs to future work.
\section{Experimental Evaluation}

To the best of our knowledge, this work is the first to propose and solve RC-POMDPs, and thus there are no existing algorithms to compare to directly. The purpose of our evaluation is to (i) empirically compare the \emph{behavior} of policies computed for RC-POMDPs with those computed for C-POMDPs, and (ii) evaluate the performance of our proposed algorithm for RC-POMDPs. To this end, we consider the following offline algorithms to compare against our \textbf{ARCS}\footnote{Our code is open sourced at \url{https://github.com/CU-ADCL/RC-PBVI.jl}}:

\begin{itemize}[nosep] 
    \item \textbf{CGCP} \cite{walraven2018cgcp}: Algorithm that computes near-optimal policies for C-POMDPs using a primal-dual approach.
    \item \textbf{CGCP-CL}: Closed-loop CGCP with updates on belief and admissible cost at each time step.
    \item \textbf{Exp-Gradient \cite{kalagarla22aNoRegret}}: Algorithm that computes mixed policies for C-POMDPs using a no-regret learning approach with a primal-dual approach using an Exponentiated Gradient method.
    \item \textbf{CPBVI} \cite{kim2011cpbvi}: Approximate dynamic programming that uses admissible cost as a heuristic.
    \item \textbf{CPBVI-D}: We modify CPBVI to compute deterministic policies to evaluate its efficacy for RC-POMDPs.
\end{itemize}

Since the purpose of our comparison between RC-POMDPs and C-POMDPs is mainly with regard to constraints, we do not compare to online C-POMDP algorithms such as CC-POMCP \citet{Lee2018ccpomcp} which can handle larger problems but do not have anytime guarantees on constraint satisfaction.

We consider the following environments: (i) \textbf{CE}: Counterexample in Figure~\ref{fig:counterexample}, (ii) \textbf{C-Tiger}: A Constrained version of Tiger POMDP \cite{Kaelbling1998pomdp}, (iii) \textbf{CRS}: Constrained RockSample \cite{Lee2018ccpomcp}, and (iv) \textbf{Tunnels}: A scaled version of Example~\ref{ex:caveexample}, shown in Figure~\ref{fig:tunnels problem}. Details on each problem, experimental setup, and algorithm implementation are in the Appendix. For all algorithms except CGCP-CL, solve time is limited to $300$ seconds and online action selection to $0.05$ seconds. For CGCP-CL, $300$ seconds was given to re-compute each action. We report the mean discounted cumulative reward and cost, and constraint violation rate in Table~\ref{tab:results}. The constraint violation rate is the fraction of trials in which $d(h_t)$ becomes negative, which means Eq.~\eqref{eq:pre-recursive constraints} is violated. 

\begin{table}[thb]
    \centering
    \caption{Results for benchmarks. We report the mean for each metric. We bold the best violation rates in \textbf{black}, the highest reward with violation rate greater than $0$ in \textcolor{DarkBlue}{blue}, and the highest reward with $0$ violation rate in \textcolor{DarkGreen}{green}. Standard error of the mean, and problem parameters can be found in the appendix.}
    \scalebox{0.85}{
    \begin{tabular}{l | l | c | c| c} 
    Env. & Algorithm & Violation Rate & Reward & Cost\\ 
    \hline
    \multirow{2}{*}{CE}  & CGCP & $0.51$ & $\color{DarkBlue} \mathbf{12.00} $ & $5.19$ \\
                                                & CGCP-CL & $\mathbf{0.00}$ & $6.12$ & $3.25$ \\
                                                ($\hat{c}=5$)& 
                                                Exp-Gradient & 0.49 & 11.87 & 4.98 \\
                                                & CPBVI & $\mathbf{0.00} $ & $8.39$ & $4.38$ \\
                                                & CPBVI-D & $\mathbf{0.00} $ & $6.10$ & $3.54$ \\
                                                & Ours & $\mathbf{0.00}$ & $\color{DarkGreen}\mathbf{10.00}$ & $5.00$ \\\hline
    \multirow{2}{*}{C-Tiger} &  CGCP & $0.75$ & $-1.69$ & $3.00$ \\ 
                                                & CGCP-CL  & $0.14$ & $-2.98$ & $2.93$ \\
                                                & Exp-Gradient & 1.0 & $\color{DarkBlue}\mathbf{1.81}$ & 3.22\\
                                                ($\hat{c}=3$)& CPBVI & $0.15$ & $-11.11$ & $2.58$\\
                                                &  CPBVI-D & $0.09$ & $-9.49$ & $2.76$ \\
                                                & Ours & $\mathbf{0.00}$ & $\color{DarkGreen}\mathbf{-5.75}$ & $2.98$\\
    \hline
    \multirow{2}{*}{CRS(4,4)} & CGCP & $0.51$ & $\color{DarkBlue}\mathbf{10.43}$ & $0.51$ \\ 
                                            & CGCP-CL & $0.78$ & $1.68$ & $0.72$ \\
                                            ($\hat{c}=1$) & Exp-Gradient & 0.30 & 10.38 & 0.92\\
                                             & CPBVI & $\mathbf{0.00} $ & $-0.40 $ & $0.52$ \\
                                             & CPBVI-D & $\mathbf{0.00} $ & $0.64$ & $0.47$\\
                                            & Ours & $\mathbf{0.00}$ & $\color{DarkGreen}\mathbf{6.52}$ & $0.52$\\
    \hline
    \multirow{2}{*}{CRS(5,7)} & CGCP & $0.41$ & $\color{DarkBlue} \mathbf{11.98}$ & $1.00$\\ 
                                            & CL-CGCP & $0.18$ & $9.64$ & $0.99$ \\
                                            ($\hat{c}=1$)& Exp-Gradient & 0.30 & 11.90 & 1.31\\
                                            & CPBVI & $\mathbf{0.00} $ & $0.00 $ & $0.00 $ \\
                                             & CPBVI-D & $\mathbf{0.00} $& $0.00 $ & $0.00 $\\
                                            & Ours & $\mathbf{0.00}$ & $\color{DarkGreen}\mathbf{11.77}$ & $0.95$ \\          
    \hline
    \multirow{2}{*}{CRS(7,8)} &  CGCP & $0.36$ & $10.78$ & $0.945$\\  
                                            &  CL-CGCP & $0.20$ & \color{DarkBlue}$\mathbf{11.17}$ & $0.931$ \\ 
                                            ($\hat{c}=1$) & 
                                             EXP-Gradient & 0.32 & 10.03 & 1.15\\
                                            & CPBVI & $\mathbf{0.00}$ & $0.0$ & $0.0$ \\ 
                                             & CPBVI-D & $\mathbf{0.00}$& $0.0$ & $0.0$\\ 
                                             & Ours & $\mathbf{0.00}$ & \color{DarkGreen}$\mathbf{6.61}$ & $0.960$ \\
    \hline
    \multirow{2}{*}{Tunnels} & CGCP & $0.50$ & $1.61$ & $1.01$\\ 
                                                & CL-CGCP & $0.31$ & $1.22$ & $0.68$\\
                                                ($\hat{c}=1$)& 
                                                Exp-Gradient & 0.48 & 1.35 & 0.82 \\
                                                & CPBVI & $0.90$ & $\color{DarkBlue} \mathbf{1.92}$ & $1.62$\\
                                                & CPBVI-D & $0.89$ & $\color{DarkBlue} \mathbf{1.92}$ & $1.57$\\
                                                & Ours & $\mathbf{0.00}$ & $\color{DarkGreen}\mathbf{1.03}$ & $0.44$
    \end{tabular}
    }
\label{tab:results}
\end{table}

\begin{figure} \label{fig:tunnels}
    \centering
    \includegraphics[width=0.6\linewidth]{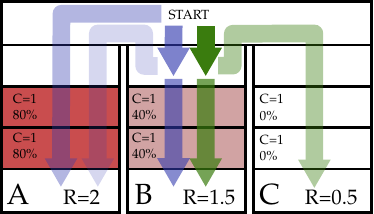}
    \caption{Tunnels. There is a cost of $1$ for rock traversal (red regions) and $0.5$ for backtracking. Trajectories from CGCP (blue) and ARCS (green) are displayed, with opacity approximately proportional to frequency of trajectories.}
    \label{fig:tunnels problem}
\end{figure}

In all environments, ARCS found admissible policies ($k = \infty$). In contrast, CGCP, Exp-Gradient, CPBVI and CPBVI-D only guarantee an admissible horizon of $k =1$, since the C-POMDP constraint is only at the initial belief. CGCP-CL may have a closed-loop admissible horizon greater than 1, but does not provide guarantees, as indicated in the violation rate.

The benchmarking results show that the policies computed for ARCS generally achieve competitive cumulative reward to policies computed for C-POMDP, without any constraint violations and thus no pathological behavior. ARCS also generally performs better in all metrics than CPBVI and CPBVI-D, both of which could not search the problem space sufficiently to find good solutions in large RC-POMDPs.

Although the C-POMDP policies generally satisfy the C-POMDP expected cost constraints, the prevalence of high violation rates of C-POMDP policies across the environments strongly suggests that the manifestation of the \emph{stochastic self-destruction} in C-POMDPs is not an exceptional phenomenon, but intrinsic to the C-POMDP problem formulation. This behavior is illustrated in the Tunnels problem, shown in Figure~\ref{fig:tunnels problem}. CGCP (in blue) decides to traverse tunnel $A$ $51\%$ of the time even when it observes that $A$ is rocky, and traverses tunnel $B$ $49\%$ of the time. In contrast, ARCS never traverses tunnel $A$, since such a policy is inadmissible. Instead, it traverses $B$ or $C$ depending on observation of rocks in tunnel $B$, to maximize rewards while remaining admissible.

Finally, the closed-loop inconsistency of C-POMDP policies is evident when comparing open loop CGCP with closed loop CGCP-CL. In most cases (all except CRS(7,8)), the cumulative reward is decreased when going from CGCP to CGCP-CL, sometimes drastically. The violation rate also decreases, but not to $0$, suggesting that planning with C-POMDPs instead of RC-POMDPs can lead to myopic behavior that cannot be addressed by re-planning. As seen in CE and both CRS, CGCP-CL attains lower reward than ARCS while still having constraint violations. Therefore, even for closed-loop planning, RC-POMDP can be more advantageous than C-POMDP.

\subsection*{Unconstrained POMDP problems}

\begin{table}[t]
    \centering
    \caption{Results for computed policy under-approximation (lower bound for reward values and upper bounds for cost values), best reward values in bold. SARSOP only considers reward value as an unconstrained POMDP algorithm.}
    \scalebox{1.0}{
    \begin{tabular}{l | l | c | c} 
    Env. & Algorithm & Reward & Cost\\ 
    \hline
    \multirow{2}{*}{CE} &SARSOP (POMDP) & $\mathbf{12.0}$ & - \\
    &  CGCP (C-POMDP)& $\mathbf{12.0}$ & $5.0$  \\ 
    $300s$ & Ours (RC-POMDP) & $\mathbf{12.0}$ & $5.0$\\
    \hline
    \multirow{2}{*}{C-Tiger} & SARSOP (POMDP) & $\mathbf{1.93}$ & -\\
    & CGCP (C-POMDP)& $1.90$ & 3.2 \\
    $300s$ & Ours (RC-POMDP) & -1.4 & 3.2\\\hline
    \multirow{2}{*}{Tunnels} & SARSOP (POMDP) & $\mathbf{1.92}$ & -\\
    &CGCP (C-POMDP) & $\mathbf{1.92}$ & 1.6\\ 
    $300s$ & Ours (RC-POMDP) & $\mathbf{1.92}$ & 1.6  \\
    \hline
    \multirow{2}{*}{CRS(4,4)} &SARSOP (POMDP) & $\mathbf{16.9}$ & - \\
    & CGCP (C-POMDP) & $\mathbf{16.9}$ & 2.4 \\ 
    $300s$ & Ours (RC-POMDP) & $\mathbf{16.9}$ & 2.2 \\
    \hline
    \multirow{2}{*}{CRS(5,7)} & SARSOP (POMDP) & $\mathbf{23.9}$ & - \\
    &CGCP (C-POMDP) & $14.8$ & $3.6$\\ 
    $300s$ & Ours (RC-POMDP) & $14.9$ & $2.1$\\
    \hline
    \multirow{2}{*}{CRS(5,7)} & SARSOP (POMDP) & $\mathbf{24.0}$ & - \\
    &CGCP (C-POMDP) & $24.0$ & $4.5$\\ 
    $1000s$ & Ours (RC-POMDP) & $15.3$ & $2.2$
    \end{tabular}
    }
\label{tab:unconstrained results}
\end{table}

Next, we evaluate how well the RC-POMDP framework and our proposed algorithm performs for problems that have reduced constraints, so as to trivially become an unconstrained POMDP. 
We evaluate ARCS (RC-POMDP), CGCP (C-POMDP algorithm) and SARSOP (unconstrained POMDP algorithm) for the same benchmark problems with very high constraint thresholds $\hat{c} = 1000$.

For these problems, all policies are admissible, and ARCS is guaranteed to asymptotically converge to the optimal solution. However, since ARCS needs to keep track of admissible cost values, we utilize a policy tree representation. This representation is less efficient than the $\alpha$-vector policy representation used in SARSOP and CGCP, which allow value improvements at a belief state to directly improve values at other belief states. 

Table~\ref{tab:unconstrained results} reports the lower bound reward and upper bound costs computed by each algorithm with a time limit of $300s$. As seen in Table~\ref{tab:unconstrained results}, our algorithm performs similar to CGCP and the unconstrained POMDP algorithm SARSOP for most smaller problems. The C-Tiger problem benefits greatly from the $\alpha$-vector representation, since the optimal policy repeatedly cycles among a small set of belief states (which our algorithm considers different augmented belief-admissible cost states). For slightly larger problems (CRS(5,7)), the efficient $\alpha$-vector representation and other heuristics of SARSOP (which CGCP takes advantage of, since it repeatedly calls SARSOP) enables much faster convergence than the policy tree-based method of our approach. Nonetheless, as time is increased, our algorithm slowly improves its values.

Overall, ARCS exhibits competitive performance in problems with reduced or no constraints, albeit with less scalability. However, the main advantage of the RC-POMDP formulation is in the careful treatment of constraints to mitigate the pathological behaviors of the C-POMDP formulation, making RC-POMDPs particularly valuable for problems involving nontrivial constraints.

\section{Conclusion and Future Work}

We introduce and analyze the \emph{stochastic self-destruction} behavior of C-POMDP policies, and show C-POMDPs may not exhibit optimal substructure. We propose a new formulation, RC-POMDPs, and present an algorithm for RC-POMDPs. Results show that C-POMDP policies exhibit unintuitive behavior not present in RC-POMDP policies, and our algorithm effectively computes policies for RC-POMDPs. We believe RC-POMDPs are an alternate formulation that can be more desirable for some applications.

A future direction is to explore models that exhibit strong cases of stochastic self-destruction, and develop more metrics that signal stochastic self-destruction. Another future direction is to analyze classes (or conditions) of RC-POMDPs that are approximable and to design algorithms that converge for such cases.

Further, our offline policy tree search algorithm can benefit from better policy search heuristics and more efficient policy representations (e.g. finite state controllers). We plan to extend this work by exploring other approaches, such as searching for finite state controllers directly \citep{Wray2022pga} and online tree search approximations \citep{Lee2018ccpomcp}.

Finally, this work shows that RC-POMDPs can provide more desirable policies than C-POMDPs, but the cost constraints remain on expectation. For some applications, probabilistic or risk measure constraints may be more desirable than expectation constraints. These formulations also benefit from the recursive constraints that we propose for RC-POMDPs.

\begin{acknowledgements}
    This work was supported by Strategic University Research Partnership (SURP) grants from the NASA Jet Propulsion Laboratory (JPL) (RSA 1688009 and 1704147). Part of this research was carried out at JPL, California Institute of Technology, under a contract with the National Aeronautics and Space Administration (80NM0018D0004).
\end{acknowledgements}

\bibliography{References.bib}



\appendix
\section{Proofs of Theorem 1}

\begin{proof}
    Consider an optimal (stochastic) policy $\pi^*$. Consider a $h_t$ reachable from $b_0$ by $\pi^*$, where $\pi^*$ has randomization (at least two actions have probabilities in $(0,1)$). Then, $\pi^*$ can be equivalently represented as a mixture over $n \leq |A|$ policies that deterministically selects a unique action at $h_t$ but is the same (stochastic) policy as $\pi^*$ everywhere else. That is, $\pi^*$ is a mixed policy over the set $\vec{\pi}^* = \{\pi_1, \cdots, \pi_n\}$ of $n$ policies that have a deterministic action at $h_t$ and $\pi_i = \pi_j = \pi^*$ at every other history. Let $w_i$ represent the non-zero probability of choosing $\pi_i$ at $h_t$. Then, $V^{\vec{\pi}^*}_R(h_t) = \sum_{i=1}^n w_i V^{\pi_i}_R(b_t)$ and 
    $V^{\vec{\pi}^*}_C(h_t) = \sum_{i=1}^n w_i V^{\pi_i}_C(b_t)$.
    
    We show that all the policies $\pi_i \in \vec{\pi}^*$ must be admissible in order for $\vec{\pi}^*$ to be admissible. Suppose there exists an inadmissible policy $\pi_j \in \vec{\pi}^*$. That is, there exists a history $h_f, f \geq t, $ s.t. Eq. \eqref{eq:pre-recursive constraints} or \eqref{eq:recursive constraints} is violated by $\pi_j$. 
    
    For $f > t$, $h_f$ is only reachable by taking action $\pi_j(h_t)$ at $h_t$, since each policy in $\vec{\pi}^*$ takes a different action at $h_t$, and so their reachable history spaces are different. Only the inadmissible $\pi_j$ is executed from $h_f$ when it is reached probablistically, and $W(h_t) + \gamma^tQ^{\pi_j}_C(b_f, \pi_j(h_f)) \not\leq \hat{c}$. This means that  Eq.\eqref{eq:pre-recursive constraints} is violated at depth $f$, so $\vec{\pi}^*$ is inadmissible, which is a contradiction. 
    
    Similarly, for $f = t$, if $V_{C}^{\pi_j}(b_t) \not\leq d(h_{t})$ (Eq.~\eqref{eq:recursive constraints} are violated),
    $$\exists h_{t+1} \text{ s.t.} V_{C}^{\pi_j}(b_{t+1}) \not\leq d(h_{t+1}),$$
    since
    $$[\forall h_{t+1} , V_C^{\pi_j}(b_{t+1}) \leq d(h_{t+1})] \implies V_C^{\pi_j}(b_t) \leq d(h_t).$$
    This can be seen by rearranging Eq.~\eqref{eq:history dependent cost recursive} and
    $$V_C^{\pi_j}(b_t) = C(b_t,\pi_j(h_t)) + \gamma \mathbb{E}[V_C^{\pi_j}(b_{t+1})]$$. That is, Eq.~\eqref{eq:recursive constraints} is violated at time step $t + 1$, so $\vec{\pi}^*$ is inadmissible, which is again a contradiction. Hence, each $\pi_i \in \vec{\pi}^*$ is admissible. 
    
    Since $\max_i V^{\pi_i}_R(b_t) \geq \sum_{i=1}^n w_i V^{\pi_i}_R(b_t)$, determinism at $h_t$ is sufficient, i.e., obtain a new $\pi^* = \arg\max_{\pi_i}(V^{\pi_i}_R(b_t))$ with one less randomization. Repeating the same process for all histories reachable from $b_0$ by $\pi^*$ with randomization obtains a deterministic optimal policy.
\end{proof}

\section{Proof of Theorem 2}

\begin{proof}
    
    Let $\mathbf{a}'$ denote the action set computed during the Bellman backup operation on $V^{\pi^*}(\bar{b}_t)$ to obtain $V^{\pi'}(\bar{b}_t)$.
    
    \begin{align*}
        \mathbf{a}' = \argmax_{a \in A}\Big[Q_R^{\pi^*}(\bar{b}_t, a) \mid
        Q^{\pi^*}_C(\bar{b}_t, a) \leq d(h_t)\Big]
    \end{align*}

    We first show that $V_R^{\pi*}(\bar{b}_t) = V_R^{\pi'}(\bar{b}_t) \;\;\forall \bar{b}_t$, i.e., the optimality after a Bellman backup operation is preserved.

    Consider any $\bar{b}_t$. By optimality of $\pi^*$,
    Suppose $\mathbf{a} \neq \emptyset$, 
    \begin{align*}
        V^{\pi'}(\bar{b}_t) = \max_{a \in A}\Big[Q_R^{\pi}(\bar{b}_t, a) \mid
        Q^{\pi}_C(\bar{b}_t, a) \leq d(h_t)\Big]\\
        V^{\pi'}(\bar{b}_t) > V^{\pi^*}(\bar{b}_t) \implies \pi^* \text{ is not optimal}
    \end{align*}
    which is a contradiction, so $V^{\pi'}(\bar{b}_t) \leq V^{\pi^*}(\bar{b}_t)$. However, we have that
    \begin{align*}
        V^{\pi'}_R(\bar{b}_t) = \max_{a}[Q_R^{\pi}(\bar{b}_t, a)]\\
        \implies V^{\pi'} \geq V_R^{\pi^*}(\bar{b}_t)\\
        \implies V^{\pi'} = V_R^{\pi^*}(\bar{b}_t).
    \end{align*}
    
    Next, we have that if $\mathbf{a} = \emptyset$, from Eq.~\eqref{eq:rcbackup}
    \begin{align*}
        V_R^{\pi'}(\bar{b}_t) = V_R^{\pi^*}(\bar{b}_t)
    \end{align*}
    Therefore, $V_R^{\pi*}(\bar{b}_t) = V_R^{\pi'}(\bar{b}_t) \;\;\forall \bar{b}_t$.

    Now, we show that $V_C^{\pi'}(b_t) \leq d(h_t), \;\forall \bar{b}_t \in \textsc{Reach}^{\pi'}\!(\bar{b}_0)$, i.e., the admissibility of the optimal policy after a Bellman backup operation is preserved.

    Let $\bar{b}_t \in \textsc{Reach}^{\pi'}(\bar{b}_0), t \in \mathbb{N}$ be the first augmented belief state from $b_0$ in a belief trajectory such that 
    $V_C^{\pi'}(\bar{b}_t)$ does not satisfy Eq.~\ref{eq:recursive constraints}, i.e., $\pi'$ satisfies Eq.~\ref{eq:recursive constraints} $\forall \tau < t$, and $V_C^{\pi'}(\bar{b}_t) \not\leq d(h)$.
    \begin{align*}
        V_C^{\pi'}(\bar{b}_t) \not\leq d(h_t) \implies \mathbf{a}' = \emptyset
    \end{align*}
    Therefore,
    \begin{align*}
        V_R^{\pi^*}(\bar{b}_t) = V_R^{\pi'}(\bar{b}_t),\\
        V_C^{\pi'}(\bar{b}_t) = V_C^{\pi^*}(\bar{b}_t) = (\infty, \ldots, \infty).
    \end{align*}

    Consider the augmented belief state $\bar{b}_{t-1}$ that transitions to $\bar{b}_t$ under some action $a'_{t-1} = \pi'(\bar{b}_{t-1})$ and observation $o_{t-1}$.
    \begin{align*}
        Q_C^{\pi^*}(\bar{b}_{t-1}, a'_{t-1}) &= C(b,a) + \mathbb{E}[V_R^{\pi^*}(\bar{b}')]\\
        &= C(b,a) + (\infty, \ldots, \infty)\\
        &= (\infty, \ldots, \infty) > d(h_{t-1}).
    \end{align*}
    
    Note that $d(h_{t-1}) \leq \frac{\hat{c}}{\gamma^{t-1}}$ is finite for finite $t-1$. So $a'_{t-1} \notin \argmax_{a \in A}\Big[Q_R^{\pi^*}(\bar{b}_{t-1}, a) \mid
        Q^{\pi^*}_C(\bar{b}_{t-1}, a) \leq d(h_{t-1})\Big]$ and therefore $\bar{b}_t$ cannot be reachable under $\pi'$. By induction, we have that $\forall \bar{b}_t \in \textsc{Reach}^{\pi'}(\bar{b}_0)$,
        \begin{align*}
            V_C^{\pi'}(\bar{b}_t) \leq d(h).
        \end{align*}
\end{proof}

\subsection{Proof of Theorem 3}

\begin{proof}

    Suppose that $\pi^0$ is well behaved.

    Denote $B_{inadmiss} = \{\bar{b} \;|\; \Phi = \emptyset\}$. Then, for all $\bar{b} = (b, d) \in B_{inadmiss}$, $V_C^{\pi^0}(\bar{b}) \not\leq d(h_t)$. A lack of admissible policy implies that there are no actions from $\bar{b}_t$ that leads to admissibility. This implies that $\forall a \in A$, there exists at least $1$ successor augmented belief state $\bar{b}'$, such that $\Phi(\bar{b}') = \emptyset$. Therefore,
    \begin{align*}
         \mathbf{a} &= \argmax_{a \in A}\Big[Q_R^{\pi}(\bar{b}, a) \mid
        Q^{\pi}_C(\bar{b}, a) \leq d\Big] = \emptyset\\
        \mathbb{B}[V^{\pi^0}](\bar{b}) &= (V_R^{\pi^0}(\bar{b}_t), (\infty, \ldots, \infty)).
    \end{align*}
    Therefore, we have that for all $\bar{b} \in B_{inadmiss}$, 
    $$\mathbb{B}^n[V^{\pi^0}](\bar{b}) = (V_R^{\pi^0}(\bar{b}_t), (\infty, \ldots, \infty)) \;\;\forall n.$$
    
    Next, denote $B_{admiss} = \{\bar{b}_t \;|\; \Phi(\bar{b}_t) \neq \emptyset\}$. Consider any $\bar{b} \in B_{admiss}$, $V_R^{\pi^0}(\bar{b}) \in \mathbb{R}$ and $V_C^{\pi^0}(\bar{b}) \leq d(h_t)$. There must exist at least $1$ action that is part of an admissible policy, i.e., $\mathbf{a} \neq \emptyset$. Therefore, at $\bar{b}$
   \begin{align*}
        \mathbb{B}[V^\pi](\bar{b}) = 
            \big(Q_R^{\pi}(\bar{b}, a), Q_C^{\pi}(\bar{b}, a)\big), a \in \mathbf{a}
    \end{align*}

    Since any $a \notin \mathbf{a}$ are inadmissible and will not be selected during the Bellman backup operation, we can exclude them from the set of actions without loss of generality. Denote $A'(\bar{b}) = \mathbf{a}$ as the set of actions that may be selected at $\bar{b}$. Then, for all $\bar{b} = (b, d) \in B_{admiss}$,
    \begin{align*}
        \mathbf{a'} &= \argmax_{a \in A'(\bar{b})}\Big[Q_R^{\pi}(\bar{b}, a)\Big]\\
        \mathbb{B}[V^{\pi^0}](\bar{b}) &= (Q_R^{\pi^0}((\bar{b}), a), Q_C^{\pi^0}(\bar{b}, a)), a \in \mathbf{a'}.
    \end{align*}

    Consider $V^{\pi^1} = \mathbb{B}[V^{\pi^0}](\bar{b})$. We have that
    \begin{align*}
        V_R^{\pi^1}(\bar{b}) \geq V_R^{\pi^0}(\bar{b}) \text{ and } V_C^{\pi^1}(\bar{b}) \leq d.
    \end{align*}

    By induction, $V_C^{\pi^n}(\bar{b}) \leq d \;\forall \bar{b} \in B_{admiss}$, so the policy $\pi^n$ remains admissible after applying $\mathbb{B}$. Therefore, we can write
    \begin{align*}
        \mathbb{B}_R[V_R^{\pi}](\bar{b}) = \max_{a \in A'(\bar{b})}\Big[Q_R^{\pi}(\bar{b}, a)\Big], \forall \bar{b} \in B_{admiss}.
    \end{align*}

    Note that this is the standard Bellman operator for an unconstrained discounted-sum POMDP over the set of admissible augmented belief states. From the results of the Bellman operator for a POMDP \citep{hauskrecht1997planning}, $\mathbb{B}_R$ is a contraction mapping and has a single, unique fixed point, i.e., for an optimal $\pi^*$, $\mathbb{B}_R[V_R^{\pi^*}](\bar{b}) = V_R^{\pi^*}(\bar{b}) \;\;\forall \bar{b} \in B_{admiss}$. Since $V_R^{\pi^n}(\bar{b}) = V_R^{\pi^0}(\bar{b}) \;\;\forall \bar{b} \in B_{inadmiss}, \forall n$, we have that
    \begin{align*}
        \mathbb{B}^{n}[(V_R^{\pi^0}, V_C^{\pi^0})] \rightarrow (V_R^{\pi^*}, V_C^{\pi^n}) \text{ as } n \rightarrow \infty.
    \end{align*}
\end{proof}

\section{ARCS Pseudocode}

\begin{algorithm}[ht!]
    \caption{Sampling of nodes for backup.}
    \label{alg:sampling}
    \textbf{Global variables}: $\mathcal{M}, T, \Gamma_{c_{min}}$\\
    Let $\gamma = \mathcal{M}.P.\gamma$\\
    \texttt{SAMPLE($\epsilon$)}
    \begin{algorithmic}[1] 
        \STATE $L \gets T.v_0.\lowervalue$
        \STATE $U \gets L + \epsilon$
        \IF {$rand() < 0.5$}
        \STATE \texttt{SampleHeu}($T.v_0, L, U, \epsilon_t, \gamma, 1)$
        \ELSE
        \STATE \texttt{SampleRandom}($T.v_0, \gamma)$
        \ENDIF
        \\
        \STATE \textbf{return} sampled nodes.
    \end{algorithmic}
    \texttt{SampleHeu}($v, L, U, \epsilon, \gamma, t$).
    \begin{algorithmic}[1]
        \STATE Let $\hat{V}$ be the predicted value of $v.V^*_R$
        \IF {$\hat{V} \leq L$ and $v.\uppervalue \leq max \{U, \underline{V}_R(v.b) + \epsilon \gamma^{-t}\}$}
            \STATE \textbf{return}
        \ELSE
        \STATE $\underline{Q} \gets \max_a \underline{Q}_R(v.b,a)$
        \STATE $L' \gets \max\{L, \underline{Q}\}$.
        \STATE $U' \gets \max\{U, \underline{Q} + \gamma^{-t}\epsilon\}$
        \STATE $a' \gets \argmax_a \{v.\upperQvalue(a) \mid v.\lowerQcost(a) \leq v.d\}$
        \IF {$a' = \emptyset$}
            \STATE \textbf{return}.
        \ENDIF
        \STATE $o' \gets \argmax_o [p(o | b, a')(v'.\uppervalue - v'.\lowervalue - \epsilon \gamma^{-t})]$
        \STATE Compute $L_t$ such that $L' = R(v.b,a') +$\\
        $\gamma(p(o'|b, a')L_t + \sum_{o \neq o'}p(o|v.b,a')v'.\lowervalue)$
        \STATE Compute $U_t$ such that $U' = R(v.b,a') +$\\
        $\gamma(p(o'|b, a')U_t + \sum_{o \neq o'}p(o|v.b,a')v'.\uppervalue$
        \STATE $v' \gets \texttt{SuccessorNode}(v, a', o')$.
        \STATE $T \gets v'$
        \STATE \texttt{SampleHeu}($v', L_t, U_t, \epsilon, t + 1$)
    \ENDIF
    \end{algorithmic}
    \texttt{SampleRandom}($v, \gamma$)
    \begin{algorithmic}[1]
        \STATE $a \gets rand_a\{a \in A\}$
        \STATE $o \gets rand_o\{o \in O\}$
        \STATE $v' \gets \texttt{SuccessorNode}(v, a, o)$.
        \STATE $T \gets v'$
        \IF {new node is added to $T$}
            \STATE \textbf{return}
        \ELSE
            \STATE \texttt{SampleRandom}($v', \gamma$)
        \ENDIF
    \end{algorithmic}
\end{algorithm}

\begin{algorithm}[ht!]
    \caption{Compute Successor Node.}
    \label{alg:updatenode}
    \textbf{Global variables}: $\mathcal{M}, T, \Gamma_{c_{min}}$\\
    Let $\gamma = \mathcal{M}.P.\gamma$.\\
    \texttt{SuccessorNode}($v, a, o)$
    \begin{algorithmic}[1]
       \IF{$T.child(v, a, o) \notin T$}
           \STATE $b' \gets BeliefUpdate(v.b, a, o)$ using Eq.~\eqref{eq:bayes}
          \STATE $d' \gets \frac{1}{\gamma}(v.d - C(v.b,a))$
           \STATE Initialize lower bound on $k'$ using Eq.~\eqref{eq:maximinLP}
           \STATE $(\alpha_r, \alpha_c) \gets  \argmin_{(\alpha_r, \alpha_c) \in \Gamma_{c_{min}}} \alpha_r^T b'$
            \STATE $\lowervalue \gets \alpha_r^T b'$
            \STATE $\uppercost \gets \alpha_c^T b'$
          \STATE $\uppervalue \gets \uppervalue(b')$ with Fast Informed Bound (maximizing rewards)
           \STATE $\lowercost \gets \lowercost(b')$ with Fast Informed Bound (minimizing costs)
           \STATE $\upperQvalue \gets \emptyset$
           \STATE $\lowerQvalue \gets \emptyset$
           \STATE $\upperQcost \gets \emptyset$
           \STATE $\lowerQcost \gets \emptyset$
           \STATE $v' \gets (b', d', k', \uppervalue, \lowervalue, \uppercost, \upperQvalue, \lowerQvalue, \upperQcost)$
        \ELSE
            \STATE $v' \gets T.child(v, a, o)$
       \ENDIF
       \STATE \textbf{return $v$}
    \end{algorithmic}
\end{algorithm}

\begin{algorithm}[ht!]
    \caption{Perform backup at a node}
    \label{alg:backup}
    \textbf{Global variables}: $\mathcal{M}, T, \Gamma_{c_{min}}$\\
    Let $\gamma = \mathcal{M}.P.\gamma$\\
    \texttt{\texttt{BACKUP}($v$)}    \begin{algorithmic}[1] 
        \STATE Initialize $\vec{k}$ of size $|A|$
        \FORALL{$a \in A$}
            \STATE $v.\upperQvalue(a) \gets R(v.b,a) + \gamma\mathbb{E}[v'.\uppervalue]$
            \STATE $v.\upperQcost(a) \gets C(v.b,a) + \gamma\mathbb{E}[v'.\uppercost]$
            \STATE $v.\lowerQvalue(a) \gets R(v.b,a) + \gamma\mathbb{E}[v'.\lowervalue]$
            \STATE $v.\lowerQcost(a) \gets C(v.b,a) + \gamma \mathbb{E}[v'.\lowercost]$
            \STATE $\vec{k}[a] \gets \min_{v'}v'.k$
        \ENDFOR
        \STATE $a \gets \argmax_a\{v.\lowerQvalue(a) \mid \upperQcost(a) \leq v.d\}$ 
        \IF{$a \neq \emptyset$}
            \STATE $v.\lowervalue \gets v.\lowerQvalue(a)$
            \STATE $v.\uppercost \gets v.\upperQcost(a)$
            \STATE $v.k \gets \vec{k}[a] + 1$
        \ELSE
            \STATE $a \gets \argmin_a\{v.\upperQcost(a)\}$ 
            \STATE $v.\lowervalue \gets v.\lowerQvalue(a)$
            \STATE $v.\uppercost \gets v.\upperQcost(a)$
            \STATE $v.k \gets 0$
        \ENDIF
        \STATE $a \gets \argmax_a\{v.\upperQvalue(a) \mid \lowerQcost(a) \leq v.d\}$
        \IF{$a \neq \emptyset$}
            \STATE $v.\uppervalue \gets v.\upperQvalue(a)$
            \STATE $v.\lowercost \gets v.\lowerQcost(a)$
        \ELSE
            \STATE $v.\uppervalue \gets -\infty$
            \STATE $v.\lowercost \gets \infty$
            \STATE $v.\lowervalue \gets -\infty$
            \STATE $v.\uppercost \gets \infty$
        \ENDIF
    \end{algorithmic}
\end{algorithm}

\begin{algorithm}[ht!]
    \caption{Prune nodes and node-actions from $v$}
    \label{alg:prune}
    \textbf{Global variables}: $\mathcal{M}, T, \Gamma_{c_{min}}$\\
    \texttt{PRUNE($B_{sam}$)}
    \begin{algorithmic}[1] 
    \FORALL{$v \in B_{sam}$}
        \IF{$v.\lowercost > v.d$}
            \STATE Prune $v$
        \ENDIF
        \IF{\textbf{all} node-actions $(v,a)$ are pruned}
            \STATE Prune $v$
        \ENDIF
        \STATE Initialize $\vec{k}$ of size $|A|$.
        \FORALL{$a \in A$}
            \STATE $\vec{k}[a] \gets \min v'.k$
        \ENDFOR
        \FORALL{$a, a' \in A$}
                \IF{any child of $(v, a)$ is pruned}
                    \STATE Prune node-action $(v,a)$
                \ENDIF
                \IF{$\vec{k}[a'] = \infty$, $v\upperQvalue(a) < v.\lowerQvalue(a')$}
                    \STATE Prune node-action $(v,a)$
                \ENDIF
        \ENDFOR
    \ENDFOR
    \end{algorithmic}
\end{algorithm}

\newpage
\newpage

\section{Proof of Lemma 1}

\begin{proof}
Given a maximum one-step cost $C_{max}$ for $\pi_{c}^{min}$ and a non-negative admissible cost $v.d$, a lower bound on the admissible horizon can be obtained as follows. The $k$-step admissible horizon from each leaf node $v$ is the largest $k$ such that
\begin{align*}
    \sum_{\tau=0}^{k-1} \gamma^\tau C_{max} \leq v.d.
\end{align*}
The LHS is a finite geometric series:
\begin{equation}
    \label{eq:geometric series}
    \sum_{\tau=0}^{k-1} \gamma^\tau C_{max} = C_{max}\left(\frac{1-\gamma^k}{1-\gamma}\right) \leq v.d.
\end{equation}
Hence, the largest integer $k$ that satisfies Eq.~\eqref{eq:geometric series} is
\begin{align*}
    k = \Big\lfloor \log \Big(1 - \Big(\frac{d(h_t)}{C_{max}}\Big) \cdot (1-\gamma)\Big) / \log(\gamma) \Big\rfloor.
\end{align*}

Also, we obtain the $\infty$-admissibility condition on $\pi_{c}^{min}$ by setting $k = \infty$ in Eq.~\eqref{eq:geometric series}:
\begin{align*}
    \frac{C_{max}}{1-\gamma} \leq v.d.
\end{align*}

Next, when $v.\uppercost^{\pi_{c^{min}}} = 0 \leq v.d$, since costs are non-negative, the minimum cost policy obtains $0$ cost at every future belief, and hence $k = \infty$.

Finally, when admissible cost is negative, the constraints are trivially violated, and hence $k = 0$.
\end{proof}

\section{Proof of Proposition 3}

\begin{proof}
     During search, the pruning criteria prunes policies according to four cases. We show that these four cases only prunes sub-optimal policies and inadmissible policies. 
        \begin{enumerate}
        \item $v.\lowercost > v.d$.\\
        It is easy to see that if $v.\lowercost > v.d$, it is guaranteed that no admissible policies exist from $v$, so we can prune $v$ and its subtree.
        \item At node $v$, actions $a$ and $a'$ are compared. Specifically, $v.\upperQvalue(a)$ is compared with $v.\lowerQvalue(a')$, if $k(v,a') = \infty$ (the policy from taking $a'$ is admissible). The node-action $(v,a)$ is pruned if $v.\upperQvalue(a) < v.\lowerQvalue(a')$.\\
        There are two cases to consider: (i) $k(v,a) = \infty$ and (ii) $k(v,a) < \infty$.  In case (i), $k(v,a) = \infty$. $v.\upperQvalue(a)$ is a valid upper bound of the Q reward-value of taking action $a$ as a consequence of Lemma~\ref{lemma:validity}.  In case (ii), $k(v,a) < \infty$, $v.\upperQvalue(a)$ is also a valid upper bound of the Q reward-value of taking action $a$. For a node $v$ with $\bar{b} = (v.b, v.d)$, We show that $v.\uppervalue$ is an upper bound on $V_R^*(\bar{b})$. That is, if $v.k \leq \infty$, we have that for an optimal policy starting from $\bar{b} = (v.b, v.d)$ with optimal reward-value $V_R^*(\bar{b})$ and cost-value $V_C^*(\bar{b})$,
    \begin{align}
        V_R^*(\bar{b}) \leq v.\uppervalue \text{ and }
        v.\lowercost \leq V_C^*(\bar{b}).
    \end{align}

    This can be seen by noting that the \texttt{BACKUP} step performs an RC-POMDP Bellman backup. If the policy is in fact admissible (since $v.k$ is an underestimate), then the results from Lemma~\ref{lemma:validity} hold. If the policy is not admissible, the optimal reward-value of an admissible policy from that node cannot be higher than $v.\uppervalue$ (and the optimal cost-value cannot be higher than $v.\lowercost$), since an admissible policy satisfies more constraints than a finite $k$-admissible one. In both cases, $a'$ is strictly a better action than $a$, so taking action $a$ at node $v$ cannot be part of an optimal policy.
    \item If all node-actions $(v,a)$ are pruned, $v$ is also pruned.\\
    No actions are admissible from $v$ so it is inadmissible.
    \item $(v,a)$ is pruned if any successor node from taking action $a$ is pruned.\\
    A successor node is pruned (case 1 or 3), so this policy is not admissible.
    \end{enumerate}
\end{proof}

\section{Proof of Lemma 2}

\begin{proof}
    The proof relies on the result of Theorem~\ref{thm:deterministic rcpomdp}. that for admissibility constraint $k = \infty$, deterministic policies are sufficient for optimality. That is, it is sufficient to provide upper and lower bounds over deterministic policies.

    We first show that the initial bounds $\uppervalue, \lowervalue, \uppercost, \lowercost, k$ for a new (leaf) node $v$ (Algorithm~\ref{alg:updatenode}) are true bounds on the optimal policy. 
    
    $\uppervalue$ and $\lowercost$ are initialized with the Fast Informed Bound with the unconstrained POMDP problem, separately for reward maximization and cost minimization. The Fast Informed Bound provides valid upper bounds on reward-value (and lower bounds on cost-value of a cost-minimization policy, which in turn is a lower bound on cost-value for an optimal RC-POMDP policy) \cite{hauskrecht2000value}. The upper bound on the optimal reward-value for the reward-maximization unconstrained POMDP problem is also an upper bound on the optimal reward-value for an RC-POMDP which has additional constraints. Similarly, the lower bound on the optimal cost-value for the cost-minimization unconstrained POMDP problem is also a lower bound on the cost-value of an optimal RC-POMDP policy which has additional constraints.
    
    $\lowervalue$ and $\uppercost$ are computed using the minimum cost policy $\Gamma_{c_{min}}$. This minimum cost policy is an alpha-vector policy, which is an upper bound on the cost-value function \cite{hauskrecht2000value}, so $\uppercost$ is an upper bound on the cost-value when following the minimum cost policy. It can also be seen that the value $\lowervalue$ from following the same policy is a valid bound on the optimal reward-value function from that node. Finally, the admissible horizon guarantee $k$ is initialized using the results from Lemma~\ref{lemma:k-admissible}, which is shown to be a lower bound on the true admissible horizon following the minimum cost policy.

    Next, we show that performing the \texttt{BACKUP} step (Algorithm~\ref{alg:backup} maintains the validity of the bounds. Recall the condition of this lemma that admissible horizon guarantee $v.k = \infty$ for the node $v$. Thus, after the backup step, the admissible horizon guarantee remains at $v.k = \infty$. From the proof of Theorem~\ref{thm: bpo rcpomdp}, the RC-POMDP Bellman backup $\mathbb{B}$ satisfies Bellman's Principle of Optimality and is a contraction mapping within the space of admissible value functions (and hence policies).
    
    Let $\uppervalue', \lowervalue', \uppercost', \lowercost'$ be the value after the \texttt{BACKUP} step, which performs a Bellman backup $\mathbb{B}$ on $\uppervalue, \lowervalue, \uppercost, \lowercost$:
    \begin{align*}
        v.\uppervalue' = v.\mathbb{B}(\uppervalue),\\
        v.\lowervalue' =  v.\mathbb{B}(\lowervalue),\\
        v.\uppercost' =   v.\mathbb{B}(\uppercost),\\
        v.\lowercost' = v.\mathbb{B}(\lowercost).
    \end{align*}
    Since $\mathbb{B}$ is a contraction mapping within the space of admissible policies, we see that:
    \begin{align*}
        v.\mathbb{B}(\uppervalue) \leq v.\uppervalue,\\
        v.\mathbb{B}(\uppercost) \leq v.\uppercost,\\
        v.\mathbb{B}(\lowervalue) \geq v.\lowervalue,\\
        v.\mathbb{B}(\lowercost) \geq v.\lowercost.
    \end{align*}
    
    Therefore, for $v.k = \infty$,  we have that for an optimal policy starting from $\bar{b} = (v.b, v.d)$ with optimal reward-value $V_R^*(\bar{b})$ and cost-value $V_C^*(\bar{b})$,
    \begin{align*}
        v.\lowervalue \leq V_R^*(\bar{b}) \leq v.\uppervalue\\
        V_C^*(\bar{b}) \leq v.\uppercost\\
    \end{align*}
\end{proof}

\section{Proof of Theorem 4}

\begin{proof}
    There are two termination criteria for ARCS, of which both must be true before termination. ARCS terminates when (1) it finds an admissible policy, and (2) the policy is $\epsilon$-optimal, that is when $v_0.\uppervalue - v_0.\lowervalue \leq \epsilon$. We first discuss admissibility, then $\epsilon$-optimality.

    (1) ARCS can terminate when it finds an admissible policy, i.e., $v_0.k = \infty$. ARCS finds an admissible policy when every leaf node $v_{leaf}$ under the policy satisfies (i) Eq.~\eqref{eq: infinite admissibility}, or (ii) $V_{C}^{\pi}(b'_t) = 0$.

    We prove that this is a sound condition, i.e., if the (i) and (ii) hold for every leaf node $v_{leaf}$, the computed policy is indeed admissible. As proven in Lemma~\ref{lemma:k-admissible}, the admissible horizon guarantee for a leaf node $v_{leaf}.k$ is a conservative under-approximation. Therefore, a leaf node with $v_{leaf}.k = \infty$ indeed means that we have found an admissible policy from $v_{leaf}$ (with $\Gamma_{c_{min}}$). Suppose all leaf nodes have $v_{leaf}.k = \infty$. The worst-case back-propagation of admissible horizon guarantee up the tree is sound, since a non-leaf node $v$ only has $v.k = \infty$ if all its leaf nodes have $k = \infty$ \emph{and} Eq.~\eqref{eq:precursor constraints} is satisfied at that node (Lines 9-18 in Algorithm~\ref{alg:backup}). Therefore, if $v_0.k = \infty$, the policy is admissible, and ARCS can terminate if $v_0.k = \infty$.
    
    (2) ARCS can terminate when the gap criterion at the root is satisfied, that is when $v_0.\uppervalue - v_0.\lowervalue \leq \epsilon$. 

    If $v_0.k = \infty$, the policy at $v_0.k$ is admissible, which implies every history-belief reachable under the policy tree is admissible. From Lemma~\ref{lemma:validity}, this implies that for all nodes, $\uppervalue, \lowervalue, \uppercost, \lowercost$ are valid bounds on the optimal value function. Thus, $v_0.\uppervalue$ and $v_0.\lowervalue$ are valid bounds on the optimal value function from $b_0$, and so, an $\epsilon$-optimal policy is indeed found.

    Therefore, if ARCS terminates, the computed solution is an admissible $\epsilon$-optimal policy.
\end{proof}

\section{Experimental Evaluation}
\label{appendix: evaluation}

\subsection{Implementation details}

The code for each algorithm implementation can be found in the attached supplementary material. Here, we detail parameters and implementation of the algorithms. For hyper-parameter tuning, we used the default parameters for ARCS. For the rest of the algorithms, the values of the hyper-parameters were chosen based empirical evaluations, and fixed for the experiments. For each environment, we used a maximum time step of $20$ during evaluation. Except for the Tiger problem, all algorithms reached terminal states before $20$ time steps in these problems or produced a policy which stayed still at $20$ time steps.

\subsubsection{ARCS}
We implemented ARCS as described. We used the Fast Informed Bound for the initialization of upper bound on reward value and lower bound on cost value. We used SARSOP for the computation of the minimum cost policy. We set the SARSOP hyperparameter $\kappa = 0.5$ for our experiments, the same value as \cite{Kurniawati-RSS08-SARSOP}. We used a uniform randomization ($0.5$ probability) between heuristic sampling and random sampling during planning, and we leave an analysis on how the randomization weight may affect planning efficiency to future work. 

\subsubsection{CGCP} We implemented CGCP and adapted it for discounted infinite horizon problems, using  Alg. 5 in \cite{walraven2018cgcp} as a basis. However, we use the discounted infinite horizon POMDP solver SARSOP \cite{Kurniawati-RSS08-SARSOP} in place of a finite horizon PBVI. Our method of constructing policy graphs also differs, as the approach described is for finite horizon problems. We check for a return to beliefs previously visited under the policy in order to reduce the size of the graph. A maximum time of $300$ seconds was used for CGCP. For each SARSOP iteration within CGCP, $\tau = 20$ seconds was given initially, while the solve time was incremented by $\tau^+ = 100$ seconds every time that the dual price $\lambda$ remained the same. Additionally, CGCP was limited to $100$ iterations. In an effort to reduce computation time, policy graph evaluation and SARSOP search was limited to depth $20$ (the same as the monte carlo evaluation depth) in all domains except the RockSample domains, which were allowed unlimited depth.

For the Tunnels benchmark, $1000$ monte carlo simulations to depth $20$ were used in place of a policy graph to estimate the value of policies. This was due to the inability of the policy graphs to estimate the value of some infinite horizon POMDP solutions which do not lead to terminal states or beliefs which have already appeared in the tree.

\subsubsection{CGCP-CL}
CGCP-CL uses the same parameters as CGCP, but re-plans at every time step.

\subsubsection{No-regret Learning Algorithm}

We implemented the no-regret learning algorithm from \citep{kalagarla22aNoRegret}. We used SARSOP as the unconstrained POMDP solver, and monte carlo simulations to estimate the value of policies. 

\subsubsection{CPBVI}
We implemented CPBVI based on \cite{kim2011cpbvi}. The algorithm generates a set of reachable beliefs $\mathcal{B}$ before performing iterations of approximate dynamic programming on the belief set. However, the paper did not include full details on belief set $\mathcal{B}$ generation and alpha-vector set $\Gamma$ initialization. 

The paper cited \citet{pineau2006anytime} for their belief set description, and so we followed \citet{pineau2006anytime} by expanding $\mathcal{B}$ greedily towards achieving uniform density in the set of reachable beliefs. This is done by randomly simulating a step forward from a node in the tree, thereby generating candidate beliefs, and keeping the belief that is farthest from any belief already in the tree. We repeat this expansion until the desired number of beliefs have been added to the tree.

To address $\Gamma$ initialization, we adopted the \textit{blind lower bound} approach. This approach represents the lower bound $\Gamma$ with a set of alpha-vectors corresponding to each action in $A$. Each alpha-vector is generated under the assumption that the same action is taken forever. To compute an alpha-vector corresponding to a given action, we first compute the \textit{best-action worst-state (BAWS)} lower bound. This is done by evaluating the discounted reward obtained by taking the best action in the worst state forever. We can then update the \textit{BAWS} alpha-vectors by performing value backups until convergence. 

The CPBVI algorithm involves the computation of a linear program (LP) to obtain the best action at a given belief. One of the constraints asserts that the convex combination of cost alpha-vectors evaluated at a given belief $b$ must be equal to or less than the admissible cost $d$ associated with $b$, which is used in CPBVI's heuristic approach. However, if $d < 0$, the LP becomes infeasible. The case of $d < 0$ is possible since no pruning of beliefs is conducted. The paper did not provide details to account for this situation. To address this, if the LP is infeasible, we output the action with the lowest cost, akin to ARCS' minimum cost policy method when no policy is admissible.

\subsubsection{CPBVI-D} CPBVI computes stochastic policies. We modify CPBVI to only compute deterministic policies with the following details. Instead of solving the LP to generate a stochastic action, we solve the for the single highest value action subject to the cost-value constraint. 

Although both CPBVI and CPBVI-D theoretically have performance insensitive to random seed initialization, both algorithms are sensitive to the number of belief parameter during planning. With too few beliefs selected for a problem, both algorithms cannot search the problem space sufficiently. With too many beliefs selected, the time taken for belief selection is too high for the moderately sized problems of CRS and Tunnels. Therefore, we tuned and chose a belief parameter of $30$ that allows finding solutions in the planning time of $300s$. Note that even with a small number of beliefs of $30$, CPBVI routinely overruns the planning time limit during its update step.

\subsection{Environment details}

\begin{itemize}[nosep] 
    \item \textbf{CE}: Simplified counterexample in Figure~\ref{fig:counterexample}.
    \item \textbf{C-Tiger}: A constrained version of the Tiger POMDP Problem \cite{Kaelbling1998pomdp}, with cost of $1$ for the ``listen" action.
    \item \textbf{CRS}: A constrained version of the RockSample problem \cite{Trey2004HSVI} as defined in \cite{Lee2018ccpomcp} with varying sizes and number of rocks.
    \item \textbf{Tunnels}: A scaled version of Example \ref{ex:caveexample}, shown in Fig.~\ref{fig:tunnels problem}.
\end{itemize}

Except for the RockSample environment, our environments do not depend on randomness. For each RockSample environment of which rock location depends on randomness, we used the rng algorithm MersenneTwister with a fixed seed to generate the RockSample environments. 

\subsubsection{Counterexample Problem}

The counterexample POMDP in Figure~1 uses a discount of $\gamma = 1$. In the experiments, we used a discount factor of $\gamma = 1 - e^{-14}$ to approximate a discount of $\gamma = 1$. It is modeled as an RC-POMDP as follows. 

States are enumerated as $\{s_1,s_2,s_3,s_4,s_5\}$ with actions following as $\{a_A,a_B\}$ and observations being noisy indicators for whether or not a state is rocky.

States $s_1$ and $s_2$ indicate whether cave 1 or cave 2 contains rocky terrain, respectively. Taking action $a_B$ circumvents the caves unilaterally incurring a cost of $5.0$ and transitioning to terminal state $s_5$. Taking action $a_B$ moves closer to the caves where $s_i$ deterministically transitions to $s_{i+2}$. In this transition, the agent is given an 85\% accurate observation of the true state.

At this new observation position, the agent is given a choice to commit to one of two caves where $s_3$ indicates that cave 1 contains rocks and $s_4$ indicates that cave 2 contains rocks. Action $a_A$ moves through cave 1 and $a_B$ moves through cave 2. Moving through rocks incurs a cost of $10$ while avoiding them incurs no cost. Taking action $a_A$ at this point, regardless of true state, gives a reward of $12$. States $s_3$ and $s_4$ unilaterally transition to terminal state $s_5$.

\subsubsection{Tunnels Problem}

The tunnels problem is modeled as an RC-POMDP as follows.
As depicted in figure \ref{fig:tunnels problem}, the tunnels problem consists of a centralized starting hall that funnels into 3 separate tunnels. At the end of tunnels 1,2,and 3 lie rewards 2.0, 1.5 and 0.5 respectively. However, with high reward also comes high cost as tunnel 1 has a 80\% probability of containing rocks and tunnel 2 has a 40\% probability of containing rocks while tunnel 3 is always free of rocks. If present, the rocks fill 2 steps before the reward location at the end of a tunnel and a cost of 1 is incurred if the agent traverses over these rocks. Furthermore, a cost of 1 is incurred if the agent chooses to move backwards.

The only partial observability in the state is over whether or not rocks are present in tunnels 1 or 2. As the agent gets closer to the rocks, accuracy of observations indicating the presence of rocks increases. 

\subsection{Experiment Evaluation Setup}

We implemented each algorithm in Julia using tools from the POMDPs.jl framework~\cite{egorov2017pomdps}, and all experiments were conducted single-threaded on a computer with two nominally 2.2 GHz Intel Xeon CPUs with 48 cores and 128 GB RAM. All experiments were conducted in Ubuntu 18.04.6 LTS. For all algorithms except CGCP-CL, solve time is limited to $300$ seconds and online action selection to $0.05$ seconds. For CGCP-CL, $300$ seconds was given for each action (recomputed from scratch). We simulate each policy $1000$ times, except for CGCP-CL, which is simulated $100$ times due to the time taken for re-computation of the policy at each time step. The full results with the mean and standard error of the mean for each metric are shown in Table.~\ref{tab:extended results}.

\begin{table*}[t]
    \centering
    \caption{Comparison of our RC-POMDP algorithm to state-of-the-art offline C-POMDP algorithms. The second column shows the size of the states, actions, and observations of each environment. We report the mean and $1$ standard error of the mean for each metric. A memory-out is indicated by a $-$. Note that for CRS(11,11), due to the $300s$ time limit, CGCP, Exp-Gradient and ours all compute a policy that goes directly to the exit area without interacting with any rocks, and achieve the same reward and $0$ cost.}
    \begin{center} 
    \begin{tabular}{l | c | l | c | c| c}  
    Environment & State/Action/Obs &  Algorithm & Violation Rate & Cumulative Reward & Cumulative Cost\\ [0.5ex]  
    \hline 
    \multirow{2}{*}{CE} & & CGCP & $0.514 \pm 0.016$ & $12.0 \pm 0.0$ & $5.19 \pm 0.158$\\  
                                                & & CGCP-CL & $0.0 \pm 0.0$ & $6.12 \pm 0.603$ & $3.25 \pm 0.313$\\ 
                                                ($\hat{c}=5$)& $5\,/\,2\,/\,2$ & CPBVI & $0.0 \pm 0.0$ & $8.354 \pm 0.135$ & $4.505 \pm 0.067$\\ 
                                                && CPBVI-D & $0.0 \pm 0.0$ & $6.192 \pm 0.19$ & $3.61 \pm 0.105$\\ 
                                                && EXP-Gradient & $0.485 \pm 0.016$ & $11.868 \pm 0.04$ & $4.975 \pm 0.157$ \\
                                                && Ours & $0.0 \pm 0.0$ & $10 \pm 0$ & $5 \pm 0$ \\\hline 
    \multirow{2}{*}{C-Tiger} &&  CGCP & $0.674 \pm 0.015$ & $-62.096 \pm 3.148$ & $1.536 \pm 0.034$ \\
                                                && CGCP-CL  & $0.76 \pm 0.043$ & $-72.424 \pm 5.283$ & $1.535 \pm 0.005$ \\
                                                ($\hat{c}=1.5$)&2\,/\,3\,/\,2& CPBVI & $0.482 \pm 0.016$ & $-74.456 \pm 1.79$ & $1.489 \pm 0.011$ \\
                                                & & CPBVI-D & $0.0 \pm 0.0$ & $-75.414 \pm 1.617$ & $1.497 \pm 0.0$ \\
                                                & &EXP-Gradient & $1.0 \pm 0.0$ & $-3.713 \pm 0.92$ & $2.294 \pm 0.004$ \\
                                                && Ours & $0.0 \pm 0.0$ & $-75.075 \pm 1.511$ & $1.422 \pm 0.0$ \\
    \hline
    \multirow{2}{*}{C-Tiger} &  &CGCP & $0.753 \pm 0.014$ & $-1.690 \pm 0.647$ & $2.996 \pm 0.014$ \\  
                                                & &CGCP-CL  & $0.140 \pm 0.035$ & $-2.983 \pm 2.045$ & $2.930 \pm 0.035$ \\ 
                                                ($\hat{c}=3$)& 2\,/\,3\,/\,2& CPBVI & $0.153 \pm 0.011$ & $-11.11 \pm 1.05$ & $2.58 \pm 0.010$\\ 
                                                &&  CPBVI-D & $0.0 \pm 0.0$ & $-178 \pm 2.62$ & $0.0 \pm 0.0$ \\ 
                                                && EXP-Gradient & $1.0 \pm 0.0$ & $1.813 \pm 0.323$ & $3.222 \pm 0.007$ \\
                                                && Ours & $0.0 \pm 0.0$ & $-5.75 \pm 0.522$ & $2.982 \pm 0.001$\\
    \hline
    \multirow{2}{*}{CRS(4,4)} & & CGCP & $0.512 \pm 0.024 $ & $10.434 \pm 0.125$ & $0.512 \pm 0.016$ \\  
                                            && CGCP-CL & $0.78 \pm 0.004 $ & $1.657 \pm 0.315$ & $0.724 \pm 0.040 $ \\ 
                                            ($\hat{c}=1$) & 201\,/\,8\,/\,3 &CPBVI & $0.0 \pm 0.0$ & $-0.4 \pm 0.316$ & $0.522 \pm 0.016$ \\ 
                                             && CPBVI-D & $0.0 \pm 0.0$ & $-0.321 \pm 0.434$ & $3.082 \pm 0.005$ \\
                                            && EXP-Gradient & $0.295 \pm 0.014$ & $10.383 \pm 0.156$ & $0.918 \pm 0.058$ \\
                                            && Ours & $0.0 \pm 0.0$ & $6.52 \pm 0.316$ & $0.523 \pm 0.016$\\ 
    \hline 
    \multirow{2}{*}{CRS(5,7)} & & CGCP & $0.412 \pm 0.022$ & $11.984 \pm 0.193$ & $1.00 \pm 0.038$\\  
                                            & & CL-CGCP & $0.18 \pm 0.009$ & $9.641 \pm  0.477$ & $0.991 \pm 0.034$ \\ 
                                            ($\hat{c}=1$)& 3201\,/\,12\,/\,3 & CPBVI & $0.0 \pm 0.0$ & $0.0 \pm 0.0$ & $0.0 \pm 0.0$ \\ 
                                            & & CPBVI-D & $0.0 \pm 0.0$& $0.0 \pm 0.0$ & $0.0 \pm 0.0$\\ 
                                            && EXP-Gradient & $0.30 \pm 0.014$ & $11.90 \pm 0.22$ & $1.31 \pm 0.06$\\
                                            & & Ours & $0.0 \pm 0.0$ & $11.766 \pm 0.137$ & $0.950 \pm 0.0$ \\           
    \hline 
    \multirow{2}{*}{CRS(7,8)} & & CGCP & $0.357 \pm 0.015$ & $10.78 \pm 0.19$ & $0.945 \pm 0.04$\\  
                                            & & CL-CGCP & $0.20 \pm 0.13$ & $11.17 \pm  1.53$ & $0.931 \pm 0.078$ \\ 
                                            ($\hat{c}=1$) & 12545\,/\,13\,/\,3 & CPBVI & $0.0 \pm 0.0$ & $0.0 \pm 0.0$ & $0.0 \pm 0.0$ \\ 
                                            & & CPBVI-D & $0.0 \pm 0.0$& $0.0 \pm 0.0$ & $0.0 \pm 0.0$\\ 
                                            & & EXP-Gradient & $0.322 \pm 0.015$ & $10.03 \pm 0.16$ & $1.154 \pm 0.054$ \\
                                            & & Ours & $0.0 \pm 0.0$ & $6.61 \pm 0.22$ & $0.960 \pm 0.003$ \\           
    \hline 
    \multirow{2}{*}{CRS(11,11)} & & CGCP & $0.0 \pm 0.0$ & $5.987 \pm 0.0$ & $0.0 \pm 0.0$ \\
                                            & & CL-CGCP & - & - & - \\ 
                                            ($\hat{c}=1$) & 247809\,/\,16\,/\,3 & CPBVI & $0.0 \pm 0.0$ & $0.0 \pm 0.0$ & $0.0 \pm 0.0$ \\ 
                                             & & CPBVI-D & $0.0 \pm 0.0$& $0.0 \pm 0.0$ & $0.0 \pm 0.0$\\ 
                                            & & EXP-Gradient & $0.0 \pm 0.0$ & $5.987 \pm 0.0$ & $0.0 \pm 0.0$ \\
                                            & & Ours & $0.0 \pm 0.0$ & $5.987 \pm 0.0$ & $0.0 \pm 0.0$ \\          
    \hline 
    \multirow{2}{*}{Tunnels} & & CGCP & $0.50 \pm 0.015$ & $1.612 \pm 0.011$ & $1.011 \pm 0.016$\\  
                                                && CL-CGCP & $0.31 \pm 0.046$ & $1.22 \pm 0.058$ & $0.683 \pm 0.082$\\ 
                                                ($\hat{c}=1$)& 53\,/\,3\,/\,5 & CPBVI & $0.90 \pm 0.009$ & $1.921 \pm 0.0$ & $1.621 \pm 0.02$\\ 
                                                P(correct obs) = $0.8$ & & CPBVI-D & $0.89 \pm 0.010$ & $1.921 \pm 0.0$ & $1.568 \pm 0.024$\\ 
                                                & & EXP-Gradient & $0.48 \pm 0.016$ & $1.35 \pm 0.02$ & $0.815 \pm 0.03$\\
                                                & & Ours & $0.0 \pm 0.0$ & $1.028 \pm 0.018$ & $0.440 \pm 0.020$ \\ 
                                                \hline 
    \multirow{2}{*}{Tunnels} & & CGCP & $0.492 \pm 0.016$ & $1.679 \pm 0.008$ & $1.056 \pm 0.033$ \\  
                                            & & CL-CGCP & $0.27 \pm 0.045$ & $1.17 \pm 0.061$ & $0.812 \pm 0.069$ \\ 
                                            ($\hat{c}=1$)& 53\,/\,3\,/\,5 & C-PBVI & $0.783 \pm 0.013$ & $1.921 \pm 0.0$ & $1.517 \pm 0.026$ \\ 
                                            & & EXP-Gradient & $0.44 \pm 0.016$ & $1.42 \pm 0.02$ & $0.86 \pm 0.03$\\
                                            P(correct obs) = $0.95$ && CPBVI-D & $0.812 \pm 0.012$ & $1.921 \pm 0.0$ & $1.57 \pm 0.024$ \\ 
                                            & & Ours & $0.0 \pm 0.0$ & $1.010 \pm 0.017$ & $0.273 \pm 0.013$\\ 
    \end{tabular} 
    \end{center}
    \label{tab:extended results}
\end{table*}

\end{document}